\newtheorem{thm}{Theorem}
\newtheorem{cor}[thm]{Corollary}
\newtheorem{lem}[thm]{Lemma}
\theoremstyle{definition}
\newtheorem{rem}[thm]{Remark}
\newcommand{\RR}{ \mathbb{R} }
\newcommand{\spaceo}{\hspace{2 mm}}
\newcommand{\half}{\frac{1}{2}}
\newcommand{\Prob}[1]{\mathbb{P}\left( #1 \right)}
\newcommand{\Probsubidx}[2]{\mathbb{P}_{#1}\left( #2 \right)}
\newcommand{\Abs}[1]{\left| #1 \right|}
\newcommand{\Set}[1]{\left\{ #1 \right\}}
\newcommand{\Brack}[1]{\left( #1 \right)}
\newcommand{\inner}[2]{\left< #1 , #2 \right>}
\newcommand{\Expsubidx}[2]{ \mathbb{E}_{#1} #2}
\newcommand{\norm}[1]{\left\|#1\right\|}
\newcommand{\Ind}[1]{ \mathbbm{1}_{\Set{#1}} }
\newcommand{\eps}{\varepsilon}
\newlength{\dhatheight}
\newcommand{\mcF}{\mathcal{F}}
\newcommand{\mc}[1]{\mathcal{#1}}
\newcommand{\setsep}{ \spaceo \vert \spaceo}
\newcommand{\eRad}[1]{\mathfrak{R}\left(#1\right)}
\newcommand{\eRadidx}[2]{\mathfrak{R}_{#1}\left(#2\right)}
\newcommand{\eRadn}[1]{\eRadidx{n}{#1}}
\newif\ifimagesshow
\title{Dimension Free Generalization Bounds for Non Linear Metric Learning}
\author{
Mark Kozdoba$^1$ \\
{\tt\small markk@technion.ac.il}
\and
Shie Mannor$^1$ \\
{\tt\small shie@ee.technion.ac.il}
}
\date{
     $^1${\tt\small Technion, Israel Institute of Technology}
}
\begin{document}
\maketitle

\begin{abstract}
In this work we study generalization guarantees for the metric learning problem, where the metric is induced by a neural network type embedding of the data.  Specifically, we provide uniform generalization bounds for two regimes -- the sparse regime, and a non-sparse regime which we term \emph{bounded amplification}.  The sparse regime bounds correspond to situations where $\ell_1$-type norms of the parameters are small. Similarly to the situation in classification, solutions satisfying such bounds can be obtained by an appropriate regularization of the problem.  On the other hand, unregularized SGD optimization of a metric learning loss typically does not produce sparse solutions.  We show that despite this lack of sparsity, by relying on a different, new property of the solutions, it is still possible to provide dimension free generalization guarantees. Consequently, these bounds can explain generalization in non sparse real experimental situations. We illustrate the studied phenomena on the MNIST and 20newsgroups datasets. 
\end{abstract}

\section{Introduction}
\label{sec:intro}
Metric Learning, \cite{bellet_etal_2015_metric_learning_book}, is the problem of finding a metric $\rho$ on the space of features, such that $\rho$ reflects some semantic properties of a given task. Generally, the input can be thought of as a set of labeled pairs $\Set{((x_i,x'_i),y_i)}_{i=1}^n$, where $x_i,x'_i \in \RR^d$ are the features, and $y_i$ is the label, indicating whether $x_i$ and $x'_i$ should be close in the metric or far apart.  For instance, in face identification, \cite{schroff2015facenet}, features $x_i$ and $x'_i$ corresponding to the same face should be close in $\rho$, while different faces should be far apart.

Note that the above metric learning formulation is fairly general and one can convert supervised clustering, or even standard classification problems into metric learning simply by setting $y_i=1$ if $x_i$ and $x'_i$ have the same original label and $y_i=0$ otherwise \cite{davis2007information,weinberger_kilian_2009_LMNN,cao2016generalization,khosla2020supervised,chicco2020siamese}.

The metric $\rho$ is typically assumed to be the Euclidean metric  taken after a linear or non-linear \emph{embedding} of the features.
That is, we consider a parametric family $\mc{F}^*$ of embeddings into a $k$-dimensional space, $f^* :\RR^d \rightarrow \RR^k$, and set
\footnote{Note that $\rho$ in (\ref{eq:rho_def}) is not strictly a metric. Nevertheless, this terminology is common.}
\begin{equation}
\label{eq:rho_def}
\rho(x,x') = \rho_{k,f^*}(x,x')= \frac{1}{k} \norm{f^*(x) - f^*(x')}_{2}^2. 
\end{equation} 

\begin{figure}
\centering
\subcaptionbox{
\label{fig:newsgroup_tsne_a}
Raw features, after normalization and 500 dim. PCA.}{
\includegraphics[width=\linewidth,height =3cm]{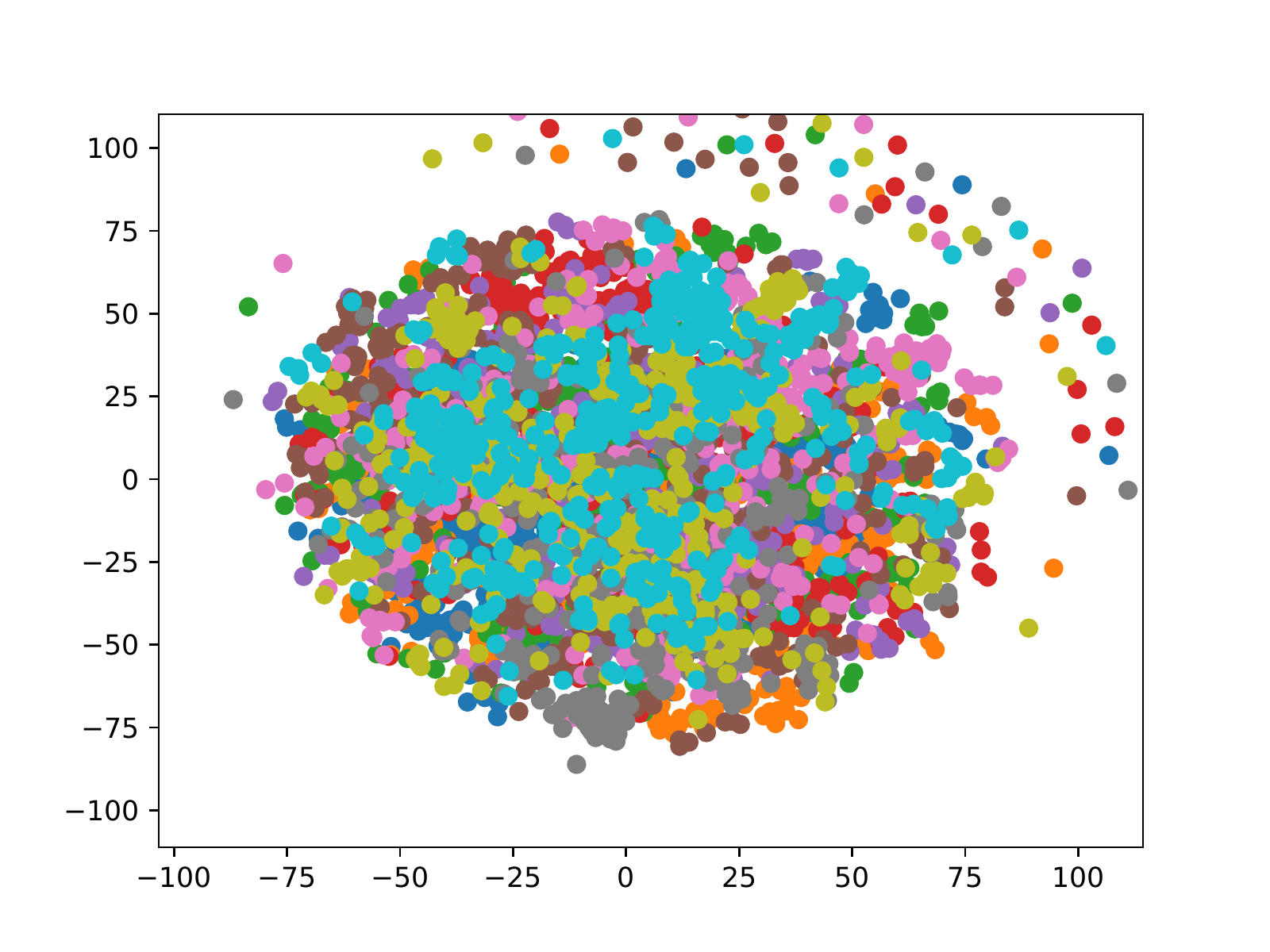}}
\\
\subcaptionbox{
\label{fig:newsgroup_tsne_b}
Learned embedding, k=50, test set.}{
\includegraphics[width=\linewidth,height =3cm]{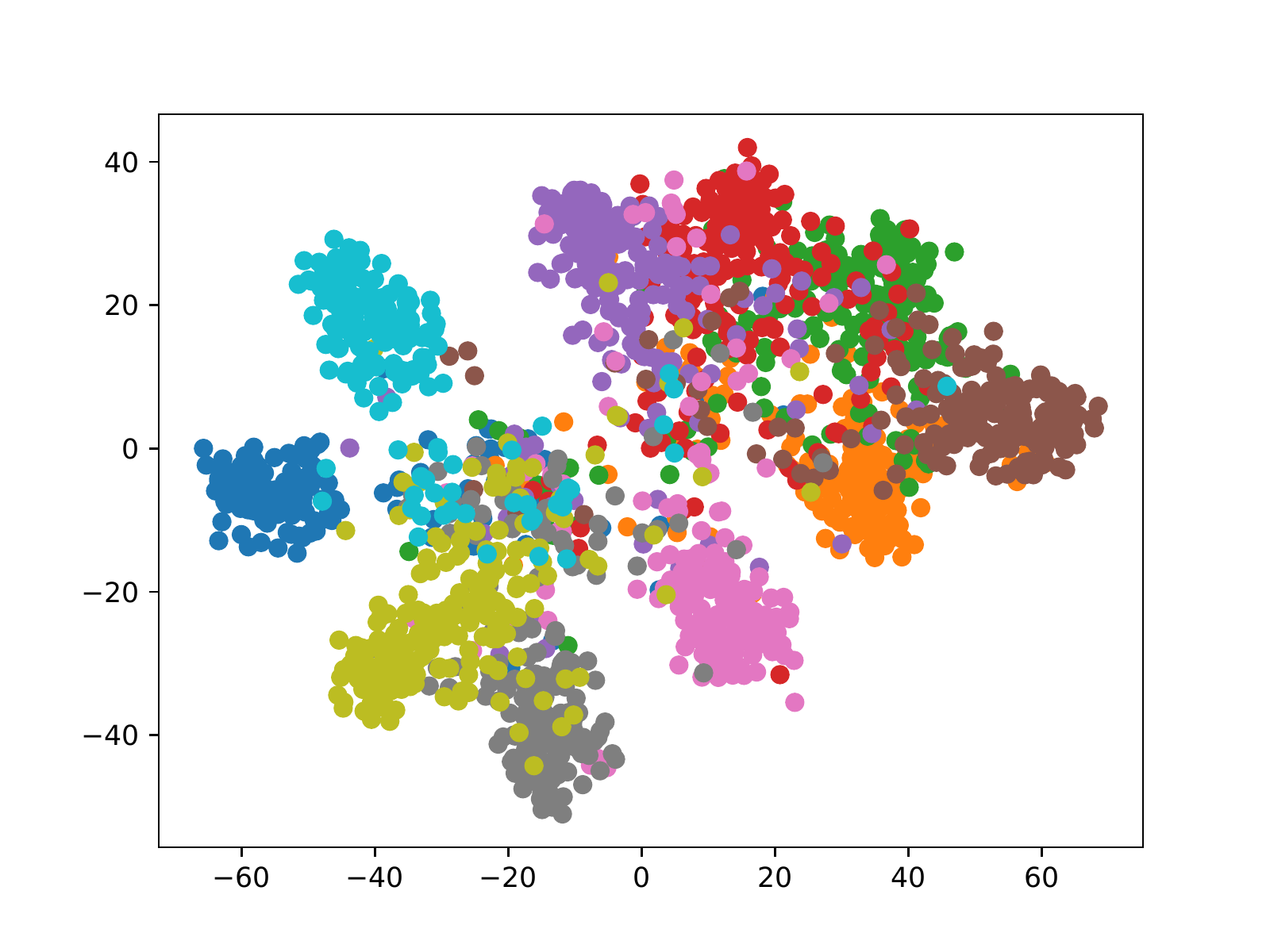}}
\caption{\label{fig:newsgroup_tsne}
tSNE plot of the 20newsgroup data, restricted to the first 10 labels. Points are colored according to the label. }
\end{figure}
As an example,  Figure \ref{fig:newsgroup_tsne} shows tSNE plots, \cite{tsne}, of the classical 20newsgroups dataset.  Figure \ref{fig:newsgroup_tsne_a} was generated using the regular bag of words representation of the data (see Section \ref{sec:experiments} for additional details) while Figure \ref{fig:newsgroup_tsne_b} was generated by applying tSNE to an embedding of the data (of the form (\ref{eq:non_lin_emebdding_one_layer_def}) below) learned by minimizing a loss based on labels, as described above. Clearly, there is no discernible relation between the label and the metric in the raw representation, but there is a strong relation in the learned metric. One can also obtain similar conclusions, and quantify them, by, for instance, replacing tSNE with spectral clustering.

The uniform \emph{generalization} problem for metric learning is the following: Given a family of embeddings $\mc{F}^*$, provide bounds, that hold uniformly for all $f^* \in \mc{F}^*$,  on the difference between the expected value of the loss and the average value of the loss on the train set; see Section \ref{sec:background_separate_section} for a formal definition.  Such bounds guarantee, for instance, that the train set would not be overfitted. Given a family of embeddings $\mc{F}^*$, consider the family $\mc{F}$ of functions $f: \RR^d \times \RR^d \rightarrow \RR$ of the form 
\begin{equation}
\label{eq:mcF_formal_def}
\mc{F} = \Set{ f(x,x') = \rho_{k,f^*}(x,x') \setsep f^* \in \mc{F}^*}.   \end{equation}
We refer to these as the distance functions, which map a pair of features into their distance after the embedding. Well known arguments imply that one can obtain generalization bounds for $\mc{F}^*$ by providing upper bounds on the \emph{Rademacher complexity} $\eRadn{\mc{F}}$ of the family of the scalar functions $\mc{F}$. Therefore in what follows, we discuss directly the upper bounds on $\eRadn{\mc{F}}$ for various families $\mc{F}^*$. We refer to Sections \ref{sec:background_separate_section} and  \ref{sec:notation} for formal definitions and details on the relation between $\mc{F}^*$,$\mc{F}$,$\eRadn{\mcF}$, and generalization bounds. 

\subsection{Sparsity Based Bounds}
\label{sec:intro_bounds}
In this paper our goal is to provide generalization bounds for neural network type non-linear embeddings $f^*$. Specifically, we will be interested in embeddings $f^*: \RR^d \rightarrow \RR^k$ of the form \begin{equation}
\label{eq:non_lin_emebdding_one_layer_def}
    f^*(x) = \phi(A^{t}x)
\end{equation}
where $A \in \RR^{d \times k}$ is a matrix and $\phi: \RR \rightarrow \RR$ is a Lipschitz non-linearity which acts on $A^{t}x$ coordinatewise. Equivalently, these embeddings are given by a single layer of a neural network. We note that the single layer case is the essential hard case for the metric learning problem. It is interesting in its own right, and moreover, multi-layer guarantees can easily be obtained by combining the single layer case with the methods used in the derivation of the existing neural network bounds. Additional details are provided in Remark \ref{rem:multi_layer_networks}. In view of this, in most parts of this paper we will concentrate on the single layer case.

Let $\norm{A}_{op}$ denote the spectral norm, and set $\norm{A}_{2,1} = \sum_{i=1}^k \norm{A_{\cdot i}}_2$, where $A_{\cdot i}$ is the $i$-th column of $A$. Let $\mc{F}^*$ be the set of embedding of the form (\ref{eq:non_lin_emebdding_one_layer_def}) induced by some set of matrices $\mc{G}$. In what follows we identify $\mc{G}$ and $\mc{F}^*$ and refer to $\mc{F}^*$ directly as the set of matrices, where the embeddings are assumed to be of the form $(\ref{eq:non_lin_emebdding_one_layer_def})$. We further assume that input feature norms are bounded by some $b>0$, $\norm{x}_2 \leq b$. Recall that $n$ is the number of samples.

Our first result is the following generalization bound (up to constants and logarithmic terms):
\begin{equation}
\label{eq:linear_bound_sparse}
    \eRadn{\mcF}\leq  \frac{ b^2 \norm{\phi}^2_{Lip} \sup_{A\in \mcF^*} \norm{A}_{op} \sup_{A\in \mcF^*} \norm{A}_{2,1} }{k \sqrt{n}}. 
\end{equation}
The full statement and proof of this result are given as Theorem \ref{thm:rad_direct_bound}.
This bound is the direct analog, in the metric learning setting, of the strongest currently known uniform bounds for neural network classification, \cite{bartlett2017spectrally}, which are also formulated in terms of $\norm{\cdot}_{op}$ and  $\norm{\cdot}_{2,1}$ norms. Bounds using these norms may generally be regarded as \emph{sparsity} type  bounds -- if we control the sum of the norms of coordinate functionals, then we control generalization. In particular, this bound will be small iff only a small number of output coordinate functionals (i.e. columns of $A$) have large norms.  This is similar to the standard $\ell_1$ regularized  logistic regression, where we want only a small number of coefficients to be large.

\subsection{Bounded Amplification Based Bounds}
If the $\norm{A}_{2,1}$ is forced to be small, for instance by adding an appropriate regularization term to the loss, then the learned $A$ will be sparse and (\ref{eq:linear_bound_sparse}) may explain generailzation. However, solutions obtained by stochastic gradient descent without regularization are not sparse. We illustrate this in Section \ref{sec:experiments}  using standard datasets.  In these experiments an embedding is learned by minimizing the loss with SGD without any regularization (or at least, without  $\norm{A}_{2,1}$ regularization), and we observe the behavior of the norms for a range of values of $k$. We find that on the one hand, even for large values of $k$ the generalization gap stays bounded, there is no overfitting. On the other hand, for the learned matrices $A$ the norm $\norm{A}_{2,1}$ grows linearly with $k$ while  $\norm{\cdot}_{op}$ grows as $\sqrt{k}$.  This in particular means 
that the left handside of (\ref{eq:linear_bound_sparse}) grows as $\sqrt{k}$, which suggests that it can not explain the observed generalization.

Consequently, to explain the above generalization, we introduce a different bound. Denote $\norm{A}_{2,\infty} = \max_{i\leq k} \norm{A_{\cdot i}}_2$. Then (again, up to logarithmic terms) we have:
\begin{equation}
\label{eq:linear_bound_non_sparse}
    \eRadn{\mcF}\leq \frac{ b^2 \sup_{A\in \mcF^*} \norm{A}_{2,\infty}^2 }{\sqrt{n}}.
\end{equation} 
This will be shown in Theorem \ref{thm:rad_non_sparse_dim_free}. In the experiments we observe that $\norm{A}_{2,\infty}$ remains roughly constant as a function of $k$,  and therefore this bound does explain the generalization.  Even more importantly, however, the bound (\ref{eq:linear_bound_non_sparse}) is remarkable because there is no sparsity control term for columns and yet it is still \emph{dimension free}. That is, the feature and embedding dimensions, $d$ and $k$, do not enter the bound.

Put differently, typically sparsity implies generalization because while the problem may have many parameters,  only few of them will be non-zero at each instance of the problem, and thus ``effective'' number of parameters is low, \cite{hastie2015statistical,bartlett2017spectrally,zhou2019NN_compression}. In contrast,  in (\ref{eq:linear_bound_non_sparse}), all $k$ columns can have the same non zero Euclidean norm $a$, the ``effective'' number of parameters can be arbitrarily high,
and yet the bound will not deteriorate when the number of samples $n$ is fixed and $k$ is growing. We refer to  situations where only $\norm{A}_{2,\infty}$ is bounded as the \emph{bounded amplification} regime, as distinguished from the sparse regime. The boundedness here refers to the fact that each column individually still should be $\ell_2$ bounded.

It is important to note that the normalization by $k$ in (\ref{eq:rho_def}) is crucial to the fact that (\ref{eq:linear_bound_non_sparse}) is dimension free. In Remark  \ref{rem:normalization_of_metric_by_k} we discus in detail why this is an appropriate normalization for this case. It is also worth noting that among the bounds (\ref{eq:linear_bound_sparse}) and (\ref{eq:linear_bound_non_sparse}) none is stronger than the other, i.e., it is not true that (\ref{eq:linear_bound_sparse}) implies 
(\ref{eq:linear_bound_non_sparse}) or vice versa. Rather, they apply to genuinely different types of embeddings $f^*$, those that minimize the $\norm{\cdot}_{2,1}$-regularized loss, and those that minimize the unregularized one.

\subsection{Methods}
\label{sec:intro_methods}
We now briefly describe the ideas underlying the proofs. To prove (\ref{eq:linear_bound_sparse}) we take the classical approach of bounding the covering numbers of the set of $\norm{\cdot}_{2,1}$ bounded matrices, using an estimate from \cite{bartlett2017spectrally}, which in turn is an extension of estimates in \cite{zhang2002covering}, \cite{bartlett_sample_1998}. 
The Rademacher bound then follows from the Dudley entropy bound. However, this method does not seem to yield (\ref{eq:linear_bound_non_sparse}), and provides bounds that are off by a $\sqrt{k}$ factor. Thus, to prove (\ref{eq:linear_bound_non_sparse}) we take a different approach.

We give two proofs for (\ref{eq:linear_bound_non_sparse}),  both of which exploit the particular structure of the metric (\ref{eq:rho_def}) as an average over the coordinates. Equivalently, the distance $\rho_{k,f}$ can be viewed as a voting outcome by an ensemble of individual coordinates. The first argument is based on the sub-additivity of Rademacher complexities and makes a direct use of the average structure as above. The second argument is a dimension reduction scheme. This  argument gives a somewhat weaker, although still dimension independent result, but has the advantage of making it much clearer \emph{why} there is dimension independent generalization. We also believe this argument has greater potential in generalizing to other situations. This dimension reduction was inspired by \cite{schapire_boosting_1998}, where a related approach was used to analyze ensemble learning.

\subsection{Contributions}
To conclude this Section we  summarize the contributions of this work: (\textbf{i}) 
For metric learning with non linear embeddings, we prove 
generalization bounds which depend only on the norms $\norm{\cdot}_{2,1}$, $\norm{\cdot}_{op}$ of the embedding matrix, similarly to the recent neural network classification bounds.  (\textbf{ii}) We introduce a new type of bounds, the bounded amplification type bounds,  with dependence only on the $\norm{\cdot}_{2,\infty}$ norm. Such bounds have not been studied before, in metric learning, or in statistical learning theory in general. The methods of proof here are also new. (\textbf{iii}) We observe empirically that the above bounded amplification regime of matrix weights occurs naturally in basic experiments on standard datasets. Our new bounds therefore can explain, in standard situations, dimension independence that can not be explained by the more common types of bounds.

The rest of this paper is organized as follows: 
In Section \ref{sec:background_separate_section} we overview the necessary background on metric learning and generalization.
Literature and related work are discussed in Section \ref{sec:literature}. In Section \ref{sec:notation} we provide the notation required for the formal discussion of the results.  
Section \ref{sec:results} contains the statements of the results and overviews of the proofs. Some of full proofs are deferred to the Supplementary Material due to space constraints. Experiments are described in Section \ref{sec:experiments}. In Section \ref{sec:conclusion} we conclude the paper and discuss future work.

\section{Metric Learning Background}
\label{sec:background_separate_section}
As discussed in Section \ref{sec:intro}, we assume that the training data is given as a set $\Set{((x_i,x'_i),y_i)}_{i=1}^n$ of labeled feature pairs, which we assume to be sampled independently form some distribution $\mc{D}$ on $\RR^d \times \RR^d \times Y$, where $Y$ is some set of label values. It is usually sufficeint to take $Y=\Set{0,1}$. Note that there may be dependence \emph{within} the pair, i.e. $x'_i$ may depend on $x_i$.

The quality of the embedding $f^*$ on a data point $((x,x'),y)$ is measured via a \emph{loss function} $\ell$, 
which usually depends on $((x,x'),y)$ only through the metric $\rho_{k,f^*}$. That is, we assume that there is a fixed function $\ell:\RR \times Y \rightarrow [0,1]$,  such that the loss of $f^* \in \mcF^*$ on a data point $((x,x'),y)$ is given by $L_{f^*}((x,x'),y) = \ell(\rho_{k,f^*}(x_i,x_i'),y_i)$.
A typical example of a loss $\ell$ is the following version of the margin loss:
\begin{equation*}
\ell_{S,D}^{\lambda}(\rho,y) = 
\begin{cases}
\min \Brack{1, \lambda \cdot ReLU\Brack{\rho - S}} & \text{ if } y=1 \\
\min \Brack{1, \lambda \cdot ReLU\Brack{D-\rho}} & \text{ otherwise}, 
\end{cases}
\end{equation*}
where $ReLU(x) = \max\Brack{0,x}$. As discussed in Section \ref{sec:intro}, this loss embodies the principle that $(x,x')$ should be close iff $y=1$, by penalizing distances above $S$ 
when $y=1$ and penalizing distances below $D$ otherwise.

The overall loss on the data, or the empirical risk, is given by 
\begin{equation}
\label{eq:empirical_risk_def}
\hat{R}_{f^*} = \frac{1}{n} \sum_{i=1}^n L_{f^*}((x_i,x_i'),y_i).
\end{equation}
The expected risk is given by 
$R_{f^*} = \Expsubidx{((x,x'),y) \sim \mc{D}}{L_{f^*} ((x,x'),y)}$. The uniform generalization problem of metric learning is similar to the generalization problem of classification: One is interested in conditions on the family $\mc{F}^*$ under which the gap between expected and empirical risks, $R_{f^*} - \hat{R}_{f^*}$, is small for all $f^* \in \mc{F}^*$.

Finally, since $\Set{((x_i,x'_i),y_i)}_{i=1}^n$ are independent, standard results imply that to control the uniform generalization bounds of the risk, it is sufficient to control the Rademacher complexity of the family $\mcF$ of distance value functions induced by $\mcF^*$, as defined in (\ref{eq:mcF_formal_def}). Specifically,  we have that (see \cite{mohri_book_2018} Theorem 3.3 and Lemma 5.7) ,  
\begin{equation*}
\sup_{f\in \mcF} R_{f^*} - \hat{R}_{f^*} \leq  2 \norm{\ell}_{Lip }\eRadn{\mcF}  + \sqrt{\frac{\log \delta^{-1}}{2n}} 
\end{equation*}
holds with probability at least $1-\delta$. Here
$\eRadn{\mcF}$ is the Rademacher complexity of $\mcF$ and $\norm{\ell}_{Lip}$ is the Lipschitz constant of $\ell$ as a function of its first coordinate.

\section{Literature}
\label{sec:literature}
A general survey of the field of metric learning can be found in \cite{bellet_etal_2015_metric_learning_book}. See also \cite{chicco2020siamese} for a survey of recent applications in deep learning contexts. In these situations, the metric learning loss is sometimes referred to as a Siamese network.

Up to now, generalization guarantees in metric learning were only studied in the \emph{linear} setting, i.e. for embeddings of the form (\ref{eq:non_lin_emebdding_one_layer_def}) where $\phi(x) = x$. In particular, all literature cited in this Section deals with the linear case.

As discussed in Sections \ref{sec:intro} and  \ref{sec:background_separate_section}, in this paper we use the formal setting introduced in \cite{verma_branson_etal_2015_sample_complexity_mahalanobis}, where we assume that the data comes as a set of iid feature pairs with a label per pair, $((x_i,x_i'),y_i)_{i=1}^n$. In this setting, the empirical risk is given by (\ref{eq:empirical_risk_def}). In the special case where the data comes with a label per feature, $(x_i,l_i)_{i=1}^n$, 
one can use an alternative notion of empirical risk, given by 
\begin{equation}
\label{eq:empirical_risk_quadratic}
\tilde{R}_f = \frac{1}{n(n-1)} \sum_{i=1}^n \sum_{j\neq i} L_f \Brack{(x_i,x_j),\Ind{l_i \neq l_j}},
\end{equation}
which is viewed as a second order U-statistic, see \cite{cao2016generalization}. That is, instead of considering the input as a set of independently sampled pairs, which can be obtained from the $(x_i,l_i)_{i=1}^n$ by, for instance, creating pairs out of consecutive samples, 
in (\ref{eq:empirical_risk_quadratic}) one considers all possible pairs. On one hand, compared to the risk $\hat{R}_f$ defined by (\ref{eq:empirical_risk_def}), for small datasets $\tilde{R}_f$ might make a somewhat better use of the data. On the other hand, $\tilde{R}_f$ is less general, since the data does not necessarily have to be generated by the label-per-feature setting, and moreover, even evaluating (\ref{eq:empirical_risk_quadratic}) may be computationally difficult since the number of terms in the sum is quadratic in dataset size. In practice, the empirical loss used is somewhere between $\hat{R}_f$ and $\tilde{R}_f$.

The generalization setting considered here is that of the \emph{uniform generalization bounds}, see  \cite{mohri_book_2018}. Alternatively, in \cite{wang2019multitask_metric_learning}, \cite{lei_etal_2020_sharper_pairwise_learning}, 
the linear case of metric learning was studied in the framework of algorithmic stability (\cite{be_stability}, \cite{feldman_vondrak}, \cite{bousquet2020sharper}).
In these works,  stability, and consequently generalization bounds, were obtained for an appropriately regularized empirical risk minimization (ERM) procedure. We note that these methods rely strongly on the (uniform) convexity of the underlying problem, and are unlikely to be generalized to non linear settings.  In \cite{bellet2015robustness},\cite{christmann2016robustness}, 
 linear metric learning was studied in the framework of algorithmic robustness, \cite{xu2012robustness}.

Finally, uniform bounds for the linear case were studied in  \cite{verma_branson_etal_2015_sample_complexity_mahalanobis}, 
and similar results for the version of risk as in (\ref{eq:empirical_risk_quadratic})  were obtained in \cite{cao2016generalization}. In particular it was shown in \cite{verma_branson_etal_2015_sample_complexity_mahalanobis}  that 
\begin{equation}
\label{eq:linear_bound}
    \eRadn{\mcF}\leq \frac{ b^2 \sup_{A\in \mcF^*} \norm{AA^t}_F}{k \sqrt{n}}.
\end{equation}
One can show that both (\ref{eq:linear_bound_sparse}) and (\ref{eq:linear_bound_non_sparse}) can be derived from this bound using known matrix norm inequalities. The bound (\ref{eq:linear_bound}) itself is derived in \cite{verma_branson_etal_2015_sample_complexity_mahalanobis} using a relatively short elegant argument involving only the Cauchy Schwartz inequality.  However, this argument can be applied only when $\phi$ is the identity. Similarly to the situation in classification, for the non-linear settings other arguments are required.

\section{Notation}
\label{sec:notation}

For a vector $v=(v_1,\ldots,v_m)\in \RR^m$, the $\ell_p$ norm is denoted by $\norm{v}_p = \Brack{\sum_{j=1}^m \Abs{v_j}^p}^{1/p}$. 
For a matrix $A \in R^{d\times k}$, and $1\leq p,s\leq \infty$, denote
$\norm{A}_{p,s} = \norm{\Brack{\norm{A_{\cdot 1}}_{p}, \ldots,\norm{A_{\cdot k}}_{p}}}_s$. That is, one first computes the $p$-th norm of the columns and then the $s$-th norm of the vector of these norms. Note that $\norm{A}_{2,2} = \norm{A}_2$ is the Frobenius norm. Denote by $\norm{A}_{op}$ the spectral norm of $A$. Throughout $\phi :\RR \rightarrow \RR$ will denote the non-linearity, with Lipschitz constant $\norm{\phi}_{Lip}$ and such that $\phi(0) = 0$.

The $\eps$ covering number of a set $\mc{A} \subset \RR^n$, denoted $\mc{N}(\mc{A},\eps)$ is the minimal size of a set $\mc{B} \subset \mc{A}$ such that for every $x\in \mc{A}$ there is $y\in \mc{B}$ s.t. $\norm{x-y}_2 \leq \eps$.
The Rademacher complexity of a set $\mcF \subset \RR^n$  is defined by 
\begin{equation}
    \eRad{\mcF} = \frac{1}{n}\Expsubidx{\eps}{\sup_{f\in \mcF} \sum_{i=1}^n \eps_i f_i}
\end{equation}
where $\eps_i$ are independent Bernoulli variables with $\Prob{\eps_i=1}=\Prob{\eps_i=-1}=\half$. If $\mcF$ is a family of functions rather than a subset of $\RR^n$, as in $(\ref{eq:mcF_formal_def})$, then we set $\eRadn{\mcF} := \eRad{\mcF_{| x,x'}}$ where $\mcF_{| x,x'} \subset \RR^n$ is the restriction of $\mcF$ to the training set (see also Section \ref{sec:sparse_proof}).

We use the standard notation 
$O(\cdot)$ and $\Omega(\cdot)$ for upper bounded and equivalent, respectively, up to absolute constants. $\overline{O}(\cdot)$ will denote upper boundedness up to absolute constants and logarithmic terms.

\section{Results}
\label{sec:results}
In Section \ref{sec:sparse_proof} we sate the sparse regime bound, Theorem \ref{thm:rad_direct_bound}, and provide an overview of the proof. The full proof is given in Supplementary Material Section \ref{sec:proof_of_thm_rad_sparse}. We also derive Corollary \ref{cor:rad_non_sparse}, which is a version of the bounded amplification bound, but with an additional $\sqrt{k}$ factor. Both proofs of the non sparse bound apply Corollary \ref{cor:rad_non_sparse} with smaller values of $k$.

In Section \ref{sec:non_sparse_proof} we state the bounded amplification result, Theorem \ref{thm:rad_non_sparse_dim_free}. As discussed in Section \ref{sec:intro_methods}, we give two proofs for Theorem \ref{thm:rad_non_sparse_dim_free}, the first of which is given in Supplementary Material Section \ref{sec:appendix_additivity_proof_of_non_sparse}. We then discuss the dimension reduction argument and prove the fundamental underlying approximation result, Lemma \ref{lem:approximation_f_k_fl}. The derivation of the Rademacher complexity bound from Lemma \ref{lem:approximation_f_k_fl} is given in Supplementary Material Section \ref{sec:appendix_dim_red_proof_of_non_sparse_result}.

\subsection{Sparse Case}
\label{sec:sparse_proof}
We use the setting described in Section \ref{sec:intro}. Assume that we are given feature pairs $\Set{(x_i,x'_i)}_{i=1}^n \in \RR^d \times \RR^d$, where each pair was sampled independently from some distribution on such pairs. Note that there may be  dependence inside pairs, i.e. $x_i$ may depend on $x'_i$, only the pairs themselves are assumed independent. These inputs are organized as two matrices, $X,X' \in \RR^{n\times d}$, with rows $x_i$ and $x'_i$ respectively. 

Define a shorthand for the normalized squared  Euclidean norm on $\RR^k$, $\zeta_k(x,x') = \frac{1}{k} \norm{x-x'}_2^2$ for 
 $x,x' \in \RR^k$.  Let $\mc{G} \subset \RR^{d\times k}$ be a set of matrices. We will be interested in the family of vectors
\begin{flalign*}
    \mcF_{k}(\mc{G}) &= \mcF_{X,X',k}(\mc{G}) \\
    &= \Set{  \Brack{\zeta_k(\phi(A^tx_i), \phi(A^tx'_i)
    }_{i=1}^n \setsep A\in \mc{G}} \subset \RR^n,
\end{flalign*}
for various sets $\mc{G}$ of matrices. In words, $\mcF_{k}(\mc{G})$ is the set of distance functions $f(x,x') = \zeta_k(\phi(A^tx), \phi(A^tx')$ induced by $A \in \mc{G}$, restricted to the input 
$\Set{(x_i,x'_i)}_{i=1}^n$. As discussed in Section \ref{sec:background_separate_section}, bounds on $\eRad{\mcF_{k}(\mc{G})}$ imply generalization bounds for metric learning with matrices in $\mc{G}$.

\begin{thm}
\label{thm:rad_direct_bound}
Given $a,a' >0$, set 
\begin{equation}
\label{eq:direct_bound_thm_g_def}
\mc{J}^k_{a,a'} = \Set{A \setsep A \in \RR^{d\times k}, \norm{A}_{2,1}\leq a, 
\norm{A}_{op}\leq a'}.
\end{equation}
Assume that $\norm{x_i}\leq b$ for all $i\leq n$. Then 
\begin{equation}
\label{eq:direct_bound_thm_g_result}
\eRad{\mcF_{k}(\mc{J}^k_{a,a'})} \leq 
 \overline{O} \Brack{\frac{1}{n} + 
 \frac{a a' b^2  \norm{\phi}^2_{Lip}}{k\sqrt{n}} }.
\end{equation}
\end{thm}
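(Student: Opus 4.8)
The plan is to follow the covering-number route announced in Section~\ref{sec:intro_methods}: reduce the covering number of $\mcF_{k}(\mc{J}^k_{a,a'})$ to that of a \emph{linear} image class, invoke the covering estimate of \cite{bartlett2017spectrally} (extending \cite{zhang2002covering,bartlett_sample_1998}), and integrate through Dudley's entropy bound. Throughout let $W=\begin{pmatrix}X\\X'\end{pmatrix}\in\RR^{2n\times d}$ be the matrix obtained by stacking the rows of $X$ and $X'$, so that $WA$ stacks the pre-activations $A^tx_i$ and $A^tx_i'$. Since $\norm{x_i}\leq b$, the relevant data-dependent norm of $W$ is $O(\sqrt{n}\,b)$, because $\norm{W}_{2}^2=\norm{X}_2^2+\norm{X'}_2^2\leq 2nb^2$ (here $\norm{\cdot}_2$ denotes the Frobenius norm).

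The first and crucial step is a Lipschitz reduction from the linear image class to $\mcF_{k}(\mc{J}^k_{a,a'})$. The obstacle is that each coordinate $f_i(A)=\zeta_k(\phi(A^tx_i),\phi(A^tx_i'))$ is \emph{quadratic} in the activations, whereas \cite{bartlett2017spectrally} controls only the linear map $A\mapsto WA$. To linearize, set $u_i=\phi(A^tx_i)-\phi(A^tx_i')$ and $\tilde u_i=\phi(\tilde A^tx_i)-\phi(\tilde A^tx_i')$ and apply the polarization identity
\begin{equation*}
f_i(A)-f_i(\tilde A)=\tfrac1k\Brack{\norm{u_i}_2^2-\norm{\tilde u_i}_2^2}=\tfrac1k\inner{u_i-\tilde u_i}{u_i+\tilde u_i}.
\end{equation*}
The factor $\norm{u_i-\tilde u_i}_2$ is controlled, via $\norm{\phi}_{Lip}$, by the change in pre-activations, i.e. by the resolution of a cover of $\Set{WA}$; the factor $\norm{u_i+\tilde u_i}_2\leq 4b\,a'\norm{\phi}_{Lip}$ is where the operator-norm constraint enters, since $\norm{u_i}_2\leq\norm{\phi}_{Lip}\norm{A}_{op}\norm{x_i-x_i'}_2\leq 2ba'\norm{\phi}_{Lip}$. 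Summing over $i$ and using Cauchy--Schwarz shows that $A\mapsto\Brack{f_i(A)}_{i=1}^n$ is $L$-Lipschitz from $\Brack{\Set{WA},\norm{\cdot}_2}$ into $\Brack{\RR^n,\norm{\cdot}_2}$ with $L=O\Brack{b\,a'\norm{\phi}_{Lip}^2/k}$. Hence any $\eps/L$-cover of $\Set{WA\setsep\norm{A}_{2,1}\leq a}$ induces an $\eps$-cover of $\mcF_{k}(\mc{J}^k_{a,a'})$.

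Next I would bound the covering number of the linear image class. The estimate of \cite{bartlett2017spectrally} applied to $\Set{WA\setsep\norm{A}_{2,1}\leq a}$ gives $\log\mc{N}\Brack{\Set{WA},\delta}=\overline{O}\Brack{a^2nb^2/\delta^2}$, with the feature and embedding dimensions $d,k$ entering only inside the logarithmic factor. Composing with the Lipschitz reduction yields $\log\mc{N}\Brack{\mcF_{k}(\mc{J}^k_{a,a'}),\eps}=\overline{O}\Brack{a^2nb^2L^2/\eps^2}$, that is, a $1/\eps^2$ (``dimension-free'') entropy with effective radius $R=O\Brack{a\sqrt{n}\,b\,L}=O\Brack{a\,a'\,b^2\norm{\phi}_{Lip}^2\sqrt{n}/k}$.

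Finally I would feed this entropy estimate into the Dudley entropy integral for $\eRad{\mcF_{k}(\mc{J}^k_{a,a'})}$. A $1/\eps^2$ entropy makes the integrand behave like $R/\eps$, so truncating the integral at a scale $\alpha$ produces the logarithmic factor absorbed by $\overline{O}$; the lower truncation contributes the additive $O(1/n)$ term, while the leading contribution scales as $R/n=\overline{O}\Brack{a\,a'\,b^2\norm{\phi}_{Lip}^2/(k\sqrt{n})}$, which is precisely the asserted bound. I expect the one genuinely delicate point to be the first step: the quadratic (squared-distance) structure of the metric forces the polarization argument and makes the operator-norm bound indispensable for controlling $\norm{u_i+\tilde u_i}_2$. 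This is exactly why, in contrast to the linear classification bounds of \cite{bartlett2017spectrally}, both $\norm{\cdot}_{2,1}$ (from the covering number) and $\norm{\cdot}_{op}$ (from the Lipschitz constant) appear in the final estimate.
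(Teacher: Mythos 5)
Your proposal is correct and follows essentially the same route as the paper's proof: you cover the linear pre-activation class via Lemma \ref{lem:A_21_regularized_bounds} (your stacked matrix $W$ plays the role of the paper's product cover $\mc{M}\times\mc{M}'$ for $\mc{Z}$), your polarization identity is exactly the algebra behind the paper's Lipschitz bound for $\zeta_k$ (Lemma \ref{lem:lip_of_kappa}), with the operator norm entering through $\norm{\phi(A^tx_i)}_2 \leq b a' \norm{\phi}_{Lip}$, and the truncated Dudley integral with $\delta = 1/\sqrt{n}$ finishes identically. The one detail you gloss over is that the cover elements $W\tilde A$ produced by Lemma \ref{lem:A_21_regularized_bounds} need not satisfy $\norm{\tilde A}_{op}\leq a'$, which your bound $\norm{u_i+\tilde u_i}_2\leq 4ba'\norm{\phi}_{Lip}$ silently uses; the paper handles this by projecting the cover into the doubly constrained set $\mc{Z}$ at the cost of a constant factor in the resolution, a routine fix.
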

We now briefly sketch the proof of Theorem \ref{thm:rad_direct_bound}. The full proof is given in Supplementary Material Section \ref{sec:proof_of_thm_rad_sparse}. 
As discussed in Section \ref{sec:intro_methods}, to prove Theorem \ref{thm:rad_direct_bound}, we bound the covering numbers $\mc{N}(\mcF_{k}(\mc{J}^k_{a,a'}),\eps)$. The Rademacher complexity bound then follows using standard considerations, via the Dudley entropy integral bound. 

To bound $\mc{N}(\mcF_{k}(\mc{J}^k_{a,a'}),\eps)$ we will use following covering lemma from \cite{bartlett2017spectrally}, stated for the case of $\norm{A}_{2,1}$ norms.  
\begin{lem}[\cite{bartlett2017spectrally}] 
\label{lem:A_21_regularized_bounds}
For any $a>0$, set  
\begin{equation}
\mc{J}^k_{a} = \Set{A \setsep A \in \RR^{d\times k}, \norm{A}_{2,1}\leq a} \subset \RR^{d \times k}, 
\end{equation}
and for any $X \in \RR^{n\times d}$ and a set of matrices $\mc{G} \subset \RR^{d\times k}$ set
$X  \mc{G} =  \Set{XA \setsep A \in \mc{G}} \subset \RR^{n \times k}$. Then for any $\eps>0$,
\begin{equation}
\label{eq:lem:A_21_regularized_bounds_statement}
    \log \mc{N}(X\mc{J}^k_a,\eps) \leq \frac{a^2\norm{X}_2^2}{\eps^2} \log \Brack{2dk}.
\end{equation}
\end{lem}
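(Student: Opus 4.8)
The plan is to prove the estimate by Maurey's empirical (sparsification) method, which is the standard engine behind $\norm{\cdot}_{2,1}$-type covering bounds. Since the covering number is measured in the metric induced by $\norm{\cdot}_2$, i.e. the Frobenius norm on $\RR^{n\times k}$, I would work throughout in the Hilbert space $\RR^{n\times k}$ equipped with the Frobenius inner product. The shape of the target bound is already informative: the factor $a^2\norm{X}_2^2/\eps^2$ is exactly the number of atoms one must average to reach accuracy $\eps$ in Maurey's lemma, while the factor $\log(2dk)$ signals that the final cover must be built from a \emph{finite} dictionary of roughly $2dk$ rank-one ``atoms.''

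The first step is to exhibit $XA$ as a scaled convex combination that respects the $\norm{\cdot}_{2,1}$ geometry. For $A\in\mc{J}^k_a$ I would write $XA=\sum_{j=1}^k\norm{A_{\cdot j}}_2\,\Brack{X\hat{A}_{\cdot j}}e_j^t$, where $\hat{A}_{\cdot j}=A_{\cdot j}/\norm{A_{\cdot j}}_2$ is a unit vector. The crucial point is that the weights $\norm{A_{\cdot j}}_2$ sum to exactly $\norm{A}_{2,1}\leq a$, so after padding with a zero atom, $XA/a$ is a genuine convex combination of atoms of the form $g_{u,j}=(Xu)e_j^t$ with $\norm{u}_2=1$ and $j\leq k$. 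Each such atom has Frobenius norm $\norm{Xu}_2\leq\norm{X}_{op}\leq\norm{X}_2$. This is precisely where the $\norm{\cdot}_{2,1}$ constraint enters: it is the total mass of the mixture, and it keeps the atom norms controlled by $\norm{X}_2$.

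I would then invoke Maurey's lemma. Letting $H$ be the atom-valued random variable with $\Exp{H}=XA/a$, drawing $m$ independent copies $H_1,\ldots,H_m$ and computing the second moment gives $\Exp{\norm{XA/a-\frac1m\sum_{l=1}^m H_l}_2^2}\leq\norm{X}_2^2/m$, so some realization satisfies $\norm{XA-\frac am\sum_{l=1}^m H_l}_2\leq a\norm{X}_2/\sqrt{m}$. Choosing $m=\lceil a^2\norm{X}_2^2/\eps^2\rceil$ makes this at most $\eps$. If the atoms were drawn from a finite dictionary of size $N$, the approximants would be averages of $m$ dictionary elements, hence at most $N^m$ in number, yielding $\log\mc{N}(X\mc{J}^k_a,\eps)\leq m\log N$; to recover $\log(2dk)$ one needs $N\approx 2dk$, i.e. a finite dictionary of signed rank-one matrices.

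I expect the main obstacle to be exactly this last reconciliation. The atoms $g_{u,j}=(Xu)e_j^t$ that are natural for the $\norm{\cdot}_{2,1}$ ball form a \emph{continuum} in $u$, so the crude count $N^m$ does not apply directly, and a naive $\eps$-net of the unit sphere would reintroduce a genuine dimension dependence rather than the benign $\log d$. The technical heart of the argument of \cite{bartlett2017spectrally} is to replace this continuous family by a finite dictionary of size $\approx 2dk$ (effectively the signed standard rank-one directions) while preserving both the mass bound by $a$ and the per-atom Frobenius bound by $\norm{X}_2$, paying only a logarithmic $\log(2dk)$ price. Matching the continuous column geometry of the $(2,1)$ ball to a finite rank-one dictionary in this way is the delicate step; by contrast, Maurey's lemma itself is a routine variance computation that I would state and prove as a self-contained sub-lemma.
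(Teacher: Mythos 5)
You should first note that the paper does not actually prove this lemma: it is imported verbatim from \cite{bartlett2017spectrally}, with only the remark that the proof there rests on Maurey's lemma, so your attempt has to be measured against that source's argument. Your plan runs on the right engine --- Maurey sparsification in the Frobenius geometry, with the $\norm{\cdot}_{2,1}$ ball entering as the total mass of a mixture --- and your variance computation and the choice $m=\lceil a^2\norm{X}_2^2/\eps^2\rceil$ are correct. But there is a genuine hole exactly where you flag one: your decomposition $XA=\sum_j\norm{A_{\cdot j}}_2\,(X\hat A_{\cdot j})e_j^t$ produces atoms $(Xu)e_j^t$ ranging over the \emph{continuum} of unit vectors $u$, so no finite count applies, and, as you correctly observe, netting the sphere costs a factor $d$ rather than $\log d$. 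Identifying the obstacle is not the same as overcoming it: as written, your argument yields a sparsification statement but not the covering bound, and the columnwise decomposition you chose is in fact a dead end for this purpose --- no discretization of that atom family recovers the $\log(2dk)$ price.

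The missing idea in \cite{bartlett2017spectrally} is to change the decomposition \emph{before} invoking Maurey: write $XA=\sum_{i=1}^d\sum_{j=1}^k A_{ij}\,(Xe_i)e_j^t$, so that the atoms come from the finite dictionary $\Set{\pm(Xe_i)e_j^t \setsep i\leq d,\ j\leq k}$ of size exactly $2dk$ --- this is literally the $2dk$ inside the logarithm. The $(2,1)$-geometry is then recovered by Cauchy--Schwarz against the column norms of $X$: for each fixed $j$, $\sum_{i}\Abs{A_{ij}}\norm{Xe_i}_2\leq\norm{A_{\cdot j}}_2\Brack{\sum_i\norm{Xe_i}_2^2}^{1/2}=\norm{A_{\cdot j}}_2\norm{X}_2$, and summing over $j$ shows that the weights $\alpha_{ij}=\Abs{A_{ij}}\norm{Xe_i}_2/(a\norm{X}_2)$ have total mass at most $1$, while the rescaled atoms $V_{ij}=a\norm{X}_2\,\mathrm{sign}(A_{ij})(Xe_i)e_j^t/\norm{Xe_i}_2$ satisfy $\alpha_{ij}V_{ij}=A_{ij}(Xe_i)e_j^t$ and all have Frobenius norm $a\norm{X}_2$. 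Maurey with $N=\lceil a^2\norm{X}_2^2/\eps^2\rceil$ draws (padding with a zero atom) then yields an approximant within $\eps$ that is an average of $N$ dictionary elements, hence at most $(2dk)^N$ candidates, which is the claim. Two instructive byproducts: the bound carries the Frobenius norm $\norm{X}_2$, whereas your continuum atoms only needed $\norm{X}_{op}$ --- passing to the finite coordinate dictionary is precisely what costs this upgrade, which is why the weaker statement you were implicitly proving could not match the advertised form --- and the Maurey approximants need not lie in $X\mc{J}^k_a$, so under this paper's internal-cover convention one projects onto the set at the cost of a factor $2$ in $\eps$, exactly as is done inside the proof of Theorem \ref{thm:rad_direct_bound}.
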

The proof of this Lemma in \cite{bartlett2017spectrally} is based on Maurey's Lemma, similarly to the arguments in \cite{zhang2002covering},\cite{bartlett_sample_1998}. We note that one can instead directly estimate the supremum of the Rademacher process indexed by $X\mc{J}_a^k$. Then Sudakov minoration, \cite{vershynin_high_dimensional_2018}, would yield a slight improvement upon (\ref{eq:lem:A_21_regularized_bounds_statement}), with no $d$ dependence inside the logarithm.

Next, define a set of $n \times 2k$ matrices $\mc{Z}$ by $\mc{Z} = \Set{ (XA,X'A) \setsep A \in \mc{J}_{a,a'}^k } \subset \RR^{n\times 2k}$. Let $\phi \mc{Z} := \Set{\phi(z) \setsep z\in \mc{Z}}$  where $\phi$ is applied coordinatewise, and let $\hat{\zeta_k}:\RR^{d \times 2k} \rightarrow \RR^n$ be the mapping that applies $\zeta_k$ to the rows of elements in $\RR^{d \times 2k}$. Observe that by definition, $\mcF_{k}(\mc{J}^k_{a,a'}) = \hat{\zeta_k}\Brack{\phi \mc{Z}}$. Now, using Lemma \ref{lem:A_21_regularized_bounds}, we can bound $\mc{N}\Brack{\mc{Z},\eps}$. Then using Lipschitzity of the mappings $\mc{Z} \mapsto \phi \mc{Z}$ and $\hat{\zeta_k}$, this translates to a bound on $\mc{N}(\mcF_{k}(\mc{J}^k_{a,a'}),\eps)$.

We now consider the bounded amplification case Corollary: 
Given $a>0$, set 
\begin{equation}
\label{eq:direct_bound_non_sparse_G_def}
\mc{G}^k_a = \Set{A \setsep A \in \RR^{d\times k}, \norm{A}_{2,\infty}\leq a }.
\end{equation}

\begin{cor}
\label{cor:rad_non_sparse} 
Assume that $\norm{x_i}\leq b$ for all $i\leq n$. Then for the set $\mc{G}_a^k$ defined by (\ref{eq:direct_bound_non_sparse_G_def}) we have
\begin{equation}
\eRad{\mcF_{k}(\mc{G}^k_a)} \leq 
 \overline{O}\Brack{\frac{1}{n} + 
 \frac{a^2 b^2 \sqrt{k} \norm{\phi}^2_{Lip}}{\sqrt{n}} }.
\end{equation}
\end{cor}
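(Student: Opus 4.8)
The plan is to obtain the corollary as an immediate consequence of Theorem~\ref{thm:rad_direct_bound}, by showing that the $\norm{\cdot}_{2,\infty}$-ball $\mc{G}^k_a$ is contained in a suitable $\mc{J}^k_{a,a'}$ and then invoking monotonicity of the Rademacher complexity. All the covering-number work is thereby reused; the only new content is a pair of elementary norm inequalities that convert a bound on $\norm{A}_{2,\infty}$ into bounds on the two norms $\norm{A}_{2,1}$ and $\norm{A}_{op}$ appearing in the sparse estimate.

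First I would bound those two norms in terms of $\norm{A}_{2,\infty}$. If $A \in \mc{G}^k_a$, so that every column satisfies $\norm{A_{\cdot i}}_2 \leq a$, then summing over the $k$ columns gives
$$\norm{A}_{2,1} = \sum_{i=1}^k \norm{A_{\cdot i}}_2 \leq ka,$$
while passing through the Frobenius norm yields
$$\norm{A}_{op} \leq \norm{A}_2 = \Brack{\sum_{i=1}^k \norm{A_{\cdot i}}_2^2}^{1/2} \leq \sqrt{k}\,a.$$
Consequently $\mc{G}^k_a \subseteq \mc{J}^k_{ka,\sqrt{k}a}$ as sets of matrices.

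Next, since enlarging the index set can only increase the supremum defining the Rademacher complexity, this inclusion lifts to $\mcF_k(\mc{G}^k_a) \subseteq \mcF_k(\mc{J}^k_{ka,\sqrt{k}a})$ as subsets of $\RR^n$, whence $\eRad{\mcF_k(\mc{G}^k_a)} \leq \eRad{\mcF_k(\mc{J}^k_{ka,\sqrt{k}a})}$. I would then apply Theorem~\ref{thm:rad_direct_bound} with the substitutions $a \mapsto ka$ and $a' \mapsto \sqrt{k}a$. The middle term of the bound simplifies as
$$\frac{(ka)(\sqrt{k}a)\,b^2\norm{\phi}^2_{Lip}}{k\sqrt{n}} = \frac{\sqrt{k}\,a^2 b^2 \norm{\phi}^2_{Lip}}{\sqrt{n}},$$
which is exactly the claimed estimate, with the $\overline{O}$ absorbing the logarithmic factors already supplied by the theorem.

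The point of this corollary is precisely that there is no hard step: it is a reduction rather than a new argument. The only thing worth flagging is that the extra $\sqrt{k}$ is genuinely forced by the operator-norm inequality $\norm{A}_{op} \leq \sqrt{k}\,\norm{A}_{2,\infty}$, which is tight when all columns of $A$ are aligned; thus this route cannot on its own shed the $\sqrt{k}$ and recover the dimension-free Theorem~\ref{thm:rad_non_sparse_dim_free}. Removing that factor requires the genuinely different subadditivity and dimension-reduction arguments indicated in Section~\ref{sec:intro_methods}, into which this corollary is fed with a reduced value of $k$.
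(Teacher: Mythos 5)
Your proof is correct and follows essentially the same route as the paper's: both establish $\norm{A}_{2,1} \leq k\norm{A}_{2,\infty}$ and $\norm{A}_{op} \leq \sqrt{k}\,\norm{A}_{2,\infty}$, deduce the inclusion $\mc{G}^k_a \subseteq \mc{J}^k_{ka,\sqrt{k}a}$, and invoke Theorem~\ref{thm:rad_direct_bound}. The only (immaterial) difference is that you route the operator-norm inequality through the Frobenius norm, whereas the paper argues directly via $\sup_{\norm{v}_2=1}\bigl(\sum_{i=1}^k \inner{A_{\cdot i}}{v}^2\bigr)^{1/2}$; your closing remark about the $\sqrt{k}$ being unavoidable along this route is also accurate.
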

The proof is given in Supplementary Material Section \ref{sec:cor_rad_non_sparse_proof}.

\subsection{Bounded Amplification Case}
\label{sec:non_sparse_proof}

\begin{thm}
\label{thm:rad_non_sparse_dim_free}
Assume that $\norm{x_i}\leq b$ for all $i\leq n$. Then for the set $\mc{G}_a^k$ defined by (\ref{eq:direct_bound_non_sparse_G_def}) we have
\begin{equation}
\eRad{\mcF_{k}(\mc{G}^k_a)} \leq 
 \overline{O}\Brack{1/n + \frac{a^2 b^2\norm{\phi}^2_{Lip}}{\sqrt{n}}}.
\end{equation}
\end{thm}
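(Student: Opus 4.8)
The plan is to exploit the average structure of the metric $\zeta_k$ together with the fact that the $\norm{\cdot}_{2,\infty}$ constraint decouples across columns, and thereby reduce the $k$-column Rademacher complexity to the single-column one. Writing out the definition, for $A \in \mc{G}^k_a$ the $i$-th coordinate of an element of $\mcF_k(\mc{G}^k_a)$ is $\frac{1}{k}\sum_{j=1}^k g_j(x_i,x_i')$, where $g_j(x,x') = \Brack{\phi(A_{\cdot j}^t x) - \phi(A_{\cdot j}^t x')}^2$ depends only on the single column $A_{\cdot j}$. The crucial observation is that the constraint $\norm{A}_{2,\infty}\leq a$ is equivalent to $\norm{A_{\cdot j}}_2 \leq a$ for every $j$ separately, so as $A$ ranges over $\mc{G}^k_a$ the columns may be chosen fully independently.

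First I would set up the decoupling formally. Each single-column summand $g_j$ is the distance function induced by the column $A_{\cdot j}$, and hence lies in $\mcF_1(\mc{G}^1_a)$. Because the column choices are independent across $j$, the family $\mcF_k(\mc{G}^k_a)$ is exactly the scaled Minkowski sum $\frac{1}{k}\Brack{\mcF_1(\mc{G}^1_a) + \cdots + \mcF_1(\mc{G}^1_a)}$ of $k$ copies of the single-column family. I would then invoke the additivity of Rademacher complexity under Minkowski sums: since $\sup_{h \in \mcF + \mcG}\sum_i \eps_i h_i = \sup_{f}\sum_i \eps_i f_i + \sup_{g}\sum_i \eps_i g_i$, taking expectations gives $\eRad{\mcF + \mcG} = \eRad{\mcF} + \eRad{\mcG}$, and $\eRad{c\mcF} = c\,\eRad{\mcF}$ for $c>0$. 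Applying this to the identity above yields
$$\eRad{\mcF_k(\mc{G}^k_a)} = \frac{1}{k}\cdot k \cdot \eRad{\mcF_1(\mc{G}^1_a)} = \eRad{\mcF_1(\mc{G}^1_a)},$$
so the normalization by $k$ in $\zeta_k$ precisely cancels the number of summands and the complexity becomes independent of $k$. It then remains to apply Corollary \ref{cor:rad_non_sparse} with $k=1$: for a single column the bound reads $\overline{O}\Brack{1/n + a^2 b^2 \norm{\phi}^2_{Lip}/\sqrt{n}}$ since there $\sqrt{k}=1$, which is exactly the claimed estimate.

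The argument is short, and the only real content is the decoupling step. I would stress that it works precisely because $\norm{\cdot}_{2,\infty}$ constrains each column in isolation, whereas the $\norm{\cdot}_{2,1}$ and $\norm{\cdot}_{op}$ constraints used in (\ref{eq:direct_bound_thm_g_result}) couple the columns and so do not admit this clean additive splitting; this is the structural reason the sparse-type bound cannot be made dimension free by the same route. The point to be careful about, and the main (albeit mild) obstacle, is verifying that the Minkowski-sum relation is an equality rather than merely an inclusion, i.e. that every independent choice of admissible columns is realizable by a single $A \in \mc{G}^k_a$; this is immediate here but is exactly where the per-column nature of the constraint is used.

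As an alternative I would mention the dimension-reduction route, which reduces $\mcF_k$ to $\mcF_\ell$ for an intermediate $\ell$ by randomly subsampling $\ell$ of the $k$ columns and controlling the resulting approximation error by concentration, as formalized in Lemma \ref{lem:approximation_f_k_fl}; this yields a (still dimension-free) bound somewhat less directly but makes transparent \emph{why} the dimension drops out and appears more robust for generalizing to other settings.
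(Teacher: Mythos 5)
Your proposal is correct and takes essentially the same approach as the paper's first proof: the identical decoupling $\mcF_{k}(\mc{G}^k_a) = \frac{1}{k}\Brack{\oplus_{j=1}^k \mcF_{1}(\mc{G}^1_a)}$, the same additivity and positive homogeneity of Rademacher complexity giving $\eRad{\mcF_{k}(\mc{G}^k_a)} = \eRad{\mcF_{1}(\mc{G}^1_a)}$, and the same final application of Corollary \ref{cor:rad_non_sparse} with $k=1$. Your alternative sketch likewise coincides with the paper's second, dimension-reduction proof via Lemma \ref{lem:approximation_f_k_fl}.
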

The proof of this result is given in  Section \ref{sec:appendix_additivity_proof_of_non_sparse} of the supplementary material. We will now discuss a dimension reduction proof, which yields a somewhat weaker, but still dimension independent bound:
\begin{equation}
\label{eq:non_sparse_bound_n_quater}
\eRad{\mcF_{k}(\mc{G}^k_a)} \leq 
 \overline{O}\Brack{1/n + \frac{a^2 b^2\norm{\phi}^2_{Lip}}{n^{1/4}}}.
\end{equation}
To make notation more compact, for any $l>0$ denote $\mc{F}_l = \mcF_{l}(\mc{G}^l_a)$. The main step of the proof 
of (\ref{eq:non_sparse_bound_n_quater}), Lemma \ref{lem:approximation_f_k_fl}, is to show that every $f \in \mc{F}_k$ can be approximated by $\hat{f} \in \mc{F}_{l}$ for any $l\leq k$, such that, roughly, $\norm{f-\hat{f}}_{\infty} \leq 1/\sqrt{l}$. This can be equivalently restated as follows: The set $\mcF_l$, of $l$-dimensional distance functions, approximates 
\emph{any} $\mcF_k$ up to $1/\sqrt{l}$, independently of $k$. We refer to this property as \emph{l-uniform approximability}, with rate $1/\sqrt{l}$. 
\begin{lem} 
\label{lem:approximation_f_k_fl}
For any $l\leq k$, for any $f \in \mc{F}_k$, there is $\hat{f} \in \mc{F}_l$ such that 
\begin{equation}
    \norm{f-\hat{f}}_{\infty} \leq 
    \frac{\Brack{a b\norm{\phi}_{Lip}}^2 \sqrt{\log 4n}}{\sqrt{l}}.
\end{equation}
\end{lem}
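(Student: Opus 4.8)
The plan is to exploit the fact that the distance $f_i = \zeta_k(\phi(A^t x_i),\phi(A^t x_i'))$ is literally an average over the $k$ output coordinates, and to approximate that average by a subsample of only $l$ of them, chosen by the probabilistic method. Concretely, fix $f \in \mc{F}_k$ realized by some $A \in \mc{G}^k_a$, and for each coordinate $j\le k$ and sample $i\le n$ set
\[
g_{ij} = \Brack{\phi(A_{\cdot j}^t x_i) - \phi(A_{\cdot j}^t x_i')}^2,
\]
so that $f_i = \frac{1}{k}\sum_{j=1}^k g_{ij}$. Since $\phi(0)=0$, $\norm{A_{\cdot j}}_2 \le a$ and $\norm{x_i},\norm{x_i'}\le b$, the Lipschitz property gives $\Abs{\phi(A_{\cdot j}^t x_i) - \phi(A_{\cdot j}^t x_i')} \le \norm{\phi}_{Lip}\, a\, \norm{x_i - x_i'}_2 \le 2ab\norm{\phi}_{Lip}$, so each $g_{ij}$ lies in an interval $[0,M]$ with $M = O\Brack{(ab\norm{\phi}_{Lip})^2}$.

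The key step, and the structural reason the bound ends up dimension free, is that subsampling coordinates keeps us inside the same norm class. Draw indices $J_1,\ldots,J_l$ (recall $l\le k$) uniformly from $\Set{1,\ldots,k}$, and let $\hat{A}\in\RR^{d\times l}$ be the matrix whose $m$-th column is $A_{\cdot J_m}$. Every column of $\hat{A}$ is a column of $A$, so $\norm{\hat{A}}_{2,\infty} = \max_m \norm{A_{\cdot J_m}}_2 \le \norm{A}_{2,\infty} \le a$; that is, $\hat{A}\in\mc{G}^l_a$, and the induced vector $\hat{f}$ with $\hat{f}_i = \zeta_l(\phi(\hat{A}^t x_i),\phi(\hat{A}^t x_i')) = \frac{1}{l}\sum_{m=1}^l g_{iJ_m}$ genuinely belongs to $\mc{F}_l$. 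This works precisely because the constraint is a \emph{per-column} $\ell_2$ bound, which can only relax when columns are deleted, in contrast to $\norm{\cdot}_{2,1}$. Moreover $\hat{f}_i$ is an unbiased estimate of $f_i$, since $\Exp{\hat f_i} = \frac{1}{l}\sum_m \Exp{g_{iJ_m}} = \frac{1}{k}\sum_{j=1}^k g_{ij} = f_i$.

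It remains to show that some realization of the sampling makes $\hat f$ uniformly close to $f$. For each fixed $i$, $\hat f_i - f_i = \frac{1}{l}\sum_{m=1}^l (g_{iJ_m}-f_i)$ is an average of $l$ i.i.d.\ mean-zero variables bounded in range $M$, so Hoeffding's inequality gives $\Prob{\Abs{\hat f_i - f_i} > t} \le 2\exp\Brack{-2lt^2/M^2}$. A union bound over the $n$ coordinates yields $\Prob{\max_i \Abs{\hat f_i - f_i} > t} \le 2n\exp\Brack{-2lt^2/M^2}$, which is strictly below $1$ once $t = M\sqrt{\log(4n)/(2l)}$. By the probabilistic method there is a choice of $J_1,\ldots,J_l$, hence an $\hat f\in\mc{F}_l$, with $\normsup{f-\hat f} = O\Brack{(ab\norm{\phi}_{Lip})^2\sqrt{\log(4n)/l}}$, recovering the claimed $\sqrt{\log 4n}/\sqrt{l}$ rate. (Sampling the $J_m$ without replacement, which $l\le k$ permits, leaves the Hoeffding bound valid and gives the same conclusion.)

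The routine part is the concentration estimate; the main thing to get right is the second paragraph, namely identifying coordinate subsampling as the correct randomization and checking that $\hat{A}$ stays in $\mc{G}^l_a$ \emph{with the same amplification bound} $a$. Unbiasedness alone would be worthless if $\hat f$ were not itself a legal distance function in $\mc{F}_l$, and it is exactly the invariance of the $\norm{\cdot}_{2,\infty}$ ball under deletion of columns that preserves both membership and the scale $a$, so the error $M\sqrt{\log(4n)/l}$ depends on $k$ only through our freedom to take $l$ as large as we wish.
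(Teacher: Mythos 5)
Your proof is correct and takes essentially the same route as the paper's own: random subsampling of $l$ of the $k$ coordinates, concentration for the bounded summands (you use Hoeffding where the paper invokes Bernstein, immaterial for bounded variables), a union bound over the $n$ data points, and the key observation that deleting columns preserves the $\norm{\cdot}_{2,\infty}$ bound so the subsampled matrix stays in $\mc{G}^l_a$. The only cosmetic difference is that you carry the range constant as $O\Brack{(ab\norm{\phi}_{Lip})^2}$ and hence recover the stated bound up to an absolute constant, which matches the paper's own (slightly loose) handling of the factor coming from the squared difference.
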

\begin{proof}
Fix $i\leq n$, and $A\in \mc{G}^k_{a}$. For $j\leq k$ set $r_j = (\phi(A^t x_i)_j-\phi(A^t x'_i)_j)^2$ and denote $r = (r_1,\dots,r_k) \in \RR^k$.  Set $\bar{r} = \frac{1}{k}\sum_{j=1}^k r_j$. We have by definition that 
$\zeta_k(\phi(A^t x_i),\phi(A^t x'_i)) = \bar{r}$. 
To obtain an approximation in $\RR^l$, we will choose $l$ coordinates $r_j$ at random and consider the restriction to these coordinates. 

For $l>0$ let $\pi:\RR^k \rightarrow \RR^l$ be a random coordinate projection. That is, choose indices $i_1,\ldots,i_{l}$ independently at random from $\Set{1,\ldots,k}$ and set $\pi(v) = \Brack{v_{i_1},\ldots,v_{i_l}}$ for all $v\in \RR^k$. 
Denote by $\hat{r}$ the empirical average $\hat{r} = \overline{\pi r} = \frac{1}{l}\sum_{j=1}^l (\pi r)_j$.

Note that for every $j\leq k$, $\phi(\inner{A_{\cdot j}}{x_i}) \leq a b \norm{\phi}_{Lip} $ and thus  $\norm{r}_{\infty} \leq a^2 b^2 \norm{\phi}_{Lip}^2$.
Therefore by Bernstein's inequality, \cite{boucheron_concentration_2013}, 
\begin{equation}
    \Probsubidx{\pi}{\Abs{\hat{r} - \bar{r}}>t }\leq 2 exp\Brack{- l \cdot t^2 /  \Brack{a b\norm{\phi}_{Lip}}^4}.
\end{equation} 
Finally, recall that $r$ depends on $i$. Repeating this for every $i\leq n$ and using the union bound, 
\begin{equation} 
\label{eq:f_k_f_l_approx_prob}
\Probsubidx{\pi}{ \max_{i\leq n} \Abs{\hat{r} - \bar{r}}>t }\leq 2n exp\Brack{- l \cdot t^2 /  \Brack{a b\norm{\phi}_{Lip}}^4}.
\end{equation} 
Choosing $t_0 = \frac{\Brack{a b\norm{\phi}_{Lip}}^2 \sqrt{\log 4n}}{\sqrt{l}}$, the right handside of (\ref{eq:f_k_f_l_approx_prob}) is strictly smaller than 1. Therefore there is a choice $\pi$ such that for all $i\leq n$, 
$\Abs{\hat{r} - \bar{r}}\leq t_0$. 

Finally, given $A \in \RR^{d \times k}$ and $\pi = (i_1,\ldots,i_l)$, denote by $A\pi \in \RR^{d \times l}$ the restriction of A to $\pi$, $A\pi = (A_{\cdot i_1},\ldots,A_{\cdot i_l})$. Note that if $A\in \mc{G}_a^k$ then $A\pi \in \mc{G}_a^l$.  
If one denotes $f_A(i) = \zeta_k(\phi(A^tx_i),\phi(A^t x'_i))$, then we have shown that for the particular $\pi$ found above $\max_{i\leq n} \Abs{f_A(i) - f_{A\pi}(i)} \leq t_0$, thereby completing the proof. 
\end{proof}
\newcommand{\figheight}{3.35cm}
\begin{figure*}
\centering
\subcaptionbox{
\label{fig:mnist_losses}
Losses}{\includegraphics[width=0.33\textwidth,height=\figheight]{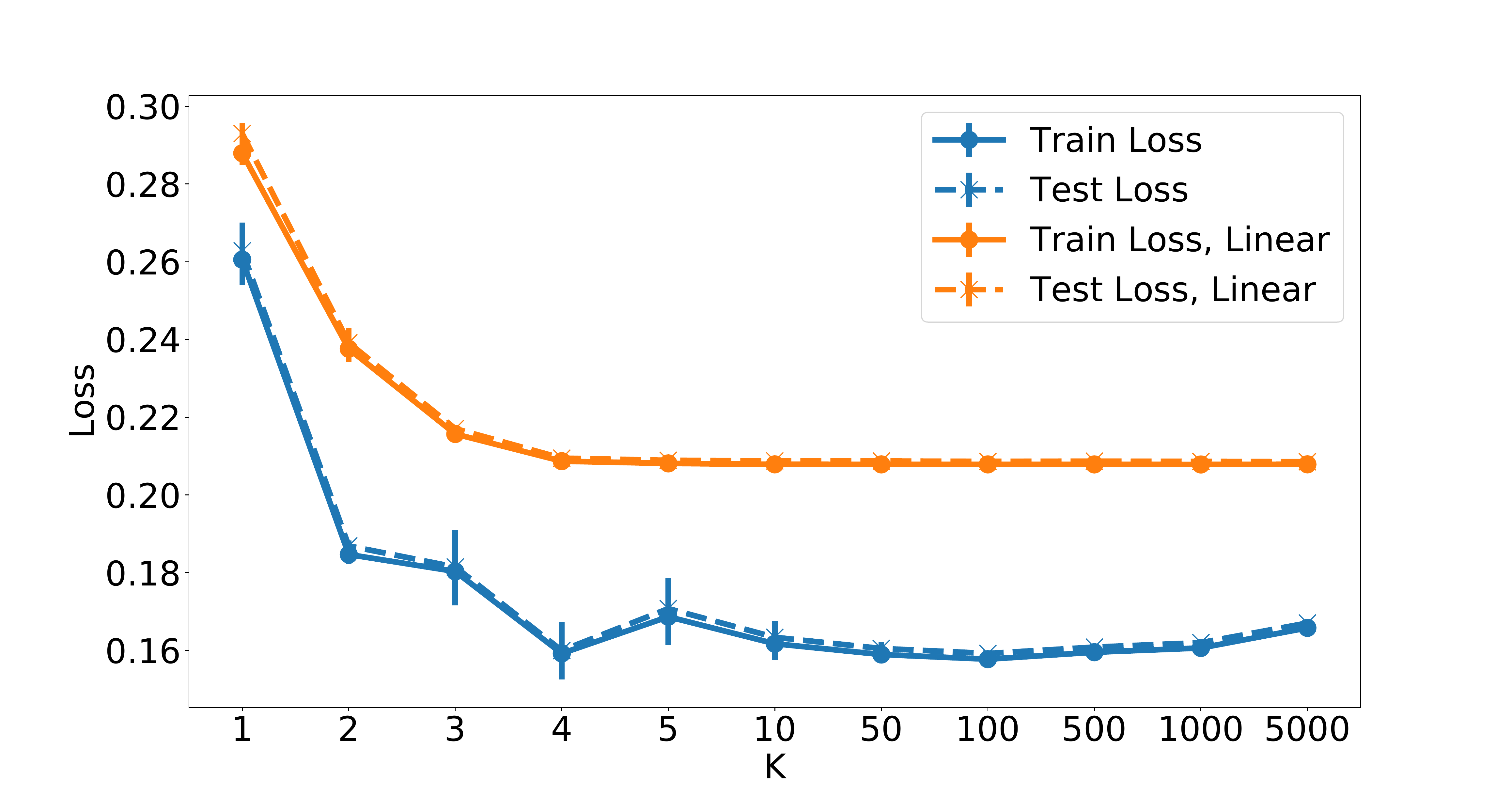}}%
\hfill 
\subcaptionbox{
\label{fig:mnist_weights}
$\norm{A}_{2,\infty}$, $\norm{A}_{2,1}/k$ and $\norm{A}_{op} / 
\sqrt{k}$ Weight Norms, Sigmoid Embedding.}{\includegraphics[width=0.33\textwidth,height=\figheight]{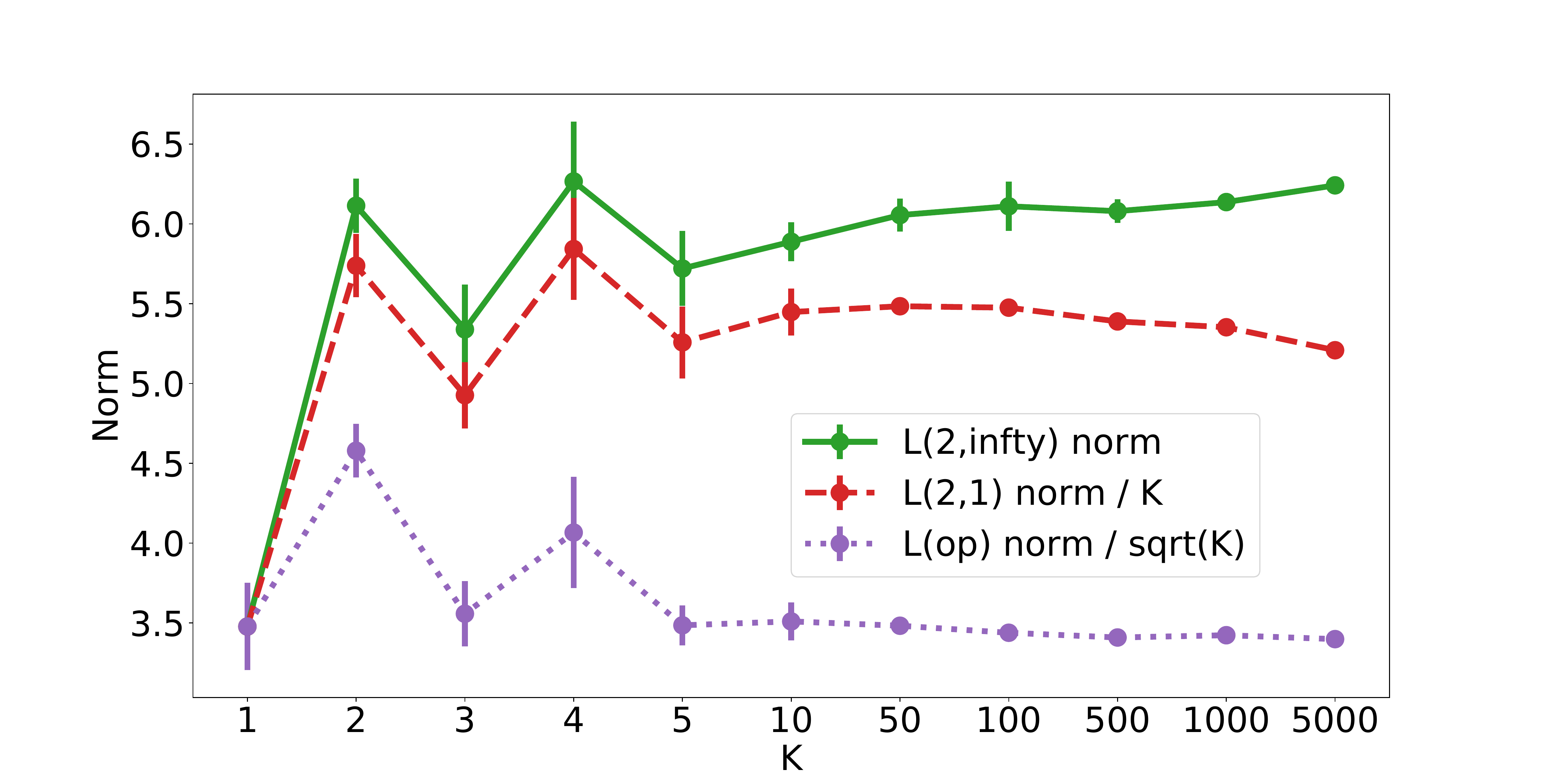}}%
\hfill 
\subcaptionbox{
\label{fig:mnist_posneg}
Same/Different Class Components of the Loss}{\includegraphics[width=0.33\textwidth,height=\figheight]{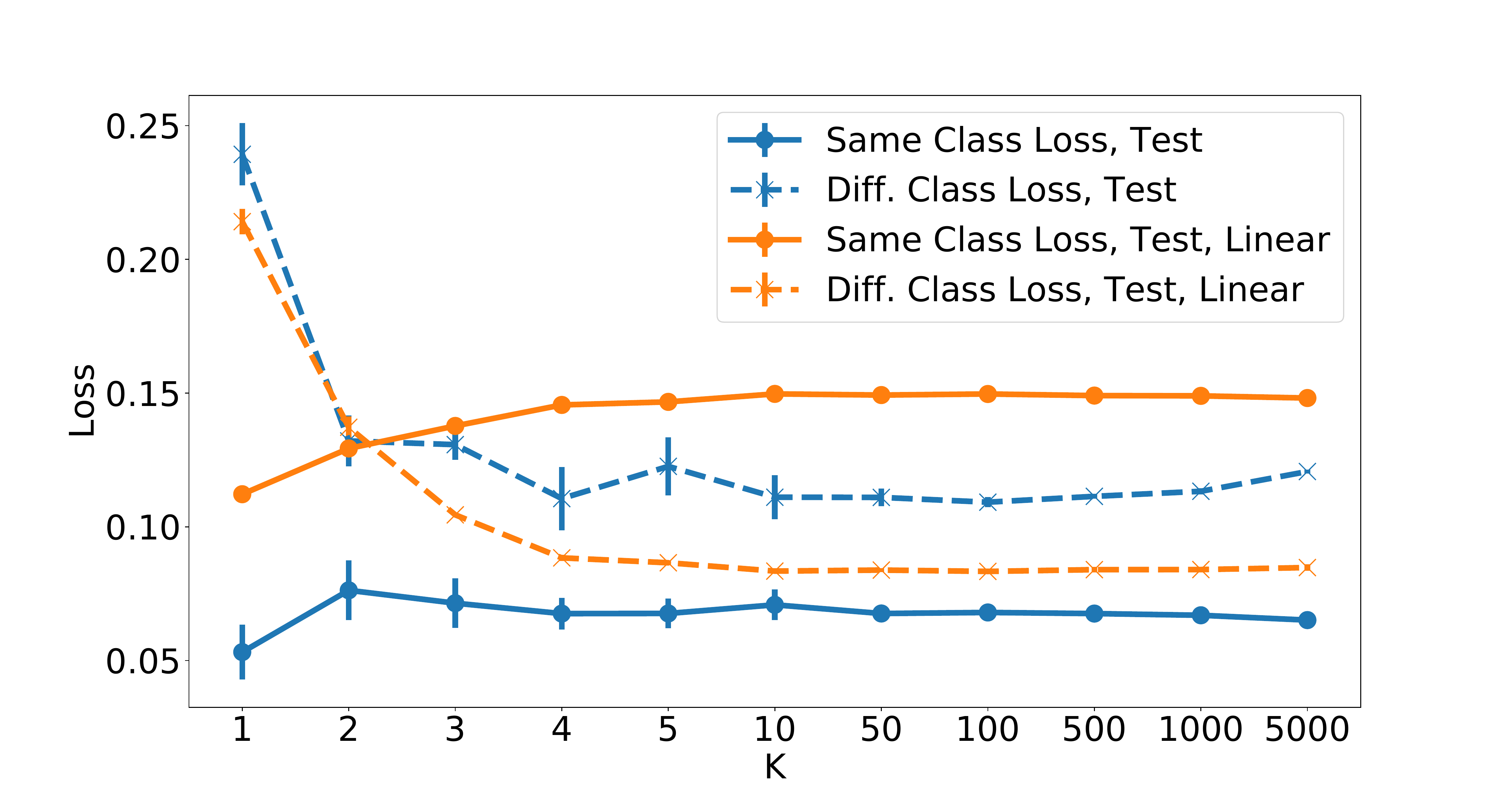}}%
\caption{\label{fig:mnist} MNIST Experiment}
\end{figure*}
To prove (\ref{eq:non_sparse_bound_n_quater}) from Lemma \ref{lem:approximation_f_k_fl}, it is suffices to note that for any $l<k$ one can write $\eRad{\mcF_{k}(\mc{G}^k_a)} \leq \eRad{\mcF_{l}(\mc{G}^l_a)} +\eRad{\Delta \mcF_{k,l}}$ where 
$\Delta \mcF_{k,l} = \Set{f - \hat{f} \setsep f\in \mcF}$. The first term can be bounded by Corollary \ref{cor:rad_non_sparse}, the second by Lemma \ref{lem:approximation_f_k_fl}, and an appropriate choice of $l$ yields (\ref{eq:non_sparse_bound_n_quater}). The full details are given in Supplementary Material Section \ref{sec:appendix_dim_red_proof_of_non_sparse_result}.

We conclude this Section with a brief discussion of the multi-layer networks and of the role of the normalization by $k$ in (\ref{eq:rho_def}). 
\begin{rem}
\label{rem:multi_layer_networks}
The special feature of metric learning compared to a regular classification problem is the particular structure of the loss. That is, that the loss depends on the features only through the metric $\rho$. 
In Theorem \ref{thm:rad_direct_bound}, however, we have only exploited the Lipschitzity of $\rho$. We therefore note that it is possible to obtain a multi-layer version of Theorem \ref{thm:rad_direct_bound} simply by replacing 
in our proof the single layer covering numbers bound
(\ref{eq:lem:A_21_regularized_bounds_statement}) with a multi-layer bound (\cite{bartlett2017spectrally}, Theorem 3.3).  

Moreover, since both our proofs of Theorem \ref{thm:rad_non_sparse_dim_free} eventually invoke Theorem \ref{thm:rad_direct_bound} for low $k$, it is clearly  possible to also extend Theorem \ref{thm:rad_non_sparse_dim_free} to the multi-layer case. In that case, the bound would depend on the $\norm{\cdot}_{op}$ and $\norm{\cdot}_{2,1}$ norms of all the layers except the last one, in a way similar to Theorem 1.1, \cite{bartlett2017spectrally}. The dependence on the last layer, however, would be only through $\norm{\cdot}_{2,\infty}$. In particular, we would still have $\sqrt{k}$ factor improvement over Theorem \ref{thm:rad_direct_bound}. On the other hand, it seems unlikely that the special properties of $\rho$ can be pushed further, to have only $\norm{\cdot}_{2,\infty}$ dependence also in lower layers.
\end{rem}

\begin{rem}
\label{rem:normalization_of_metric_by_k}
As noted earlier, the  normalization by $k$ in (\ref{eq:rho_def}) (or in the definition of $\zeta_k$) is crucial to the fact that (\ref{eq:linear_bound_non_sparse}) is dimension free. However, this is a natural normalization, ensuring that the values of $\rho_{k,f^*}$ are of the same magnitude (
$0 \leq \rho_{k,f^*} \leq 4 \norm{A}_{2,\infty}^2 b^2 $) independently of $k$, rather than explode with $k$. Note that if one normalized by anything of larger order than $k$, then the situation would have been degenerate, since $\rho_{k,f^*}$ values would simply be vanishing as $k$ grows. However, with normalization by $k$ this is not the case. The values of $\rho_{k,f^*}$ stay bounded but non-vanishing, while the number of parameters grows with $k$. Equivalently, while the magnitude of the values of $\rho_{k,f^*}$ stays bounded, 
the family of functions $\mcF_{k}(\mc{G}^k_a)$ becomes strictly larger as $k$ grows. Thus the fact that the Rademacher complexity (\ref{eq:linear_bound_non_sparse}) is bounded independently of $k$ is non-trivial. 
\end{rem}

\section{Experiments}
\label{sec:experiments}
In this Section we empirically make the following observations: (\textbf{a}) The bounded amplification regime introduced in this paper, where $\norm{A}_{2,\infty}$ is bounded while $\norm{A}_{2,1}$ and  $\norm{A}_{op}$ grow with $k$, occurs naturally in practice. For instance, we observe this on the MNIST data, trained without any regularization terms.
(\textbf{b}) The generalization gap, i.e. the difference between train and test loss, does not grow with $k$, for $k$ in the range from 1 to 5000, as suggested by the theory. In other words, we can increase $k$ arbitrarily, without risking overfitting. 
(\textbf{c}) The non-linear embeddings, even with a single layer, can perform better than the linear ones. Note that this point is not necessarily completely obvious. The linear embeddings are fairly powerful, and can, for instance, do things such as removing coordinates that are irrelevant to the label, thereby improving the relation between the metric and the label. Nevertheless, we find that on both our datasets, a sigmoid non-linearity improves the performance of the embedding. 

We consider two classical datasets, the MNIST dataset of handrwitten digits \cite{mnist_dataset},  and the 20newsgroups dataset \cite{newsgroups_dataset}, which consists of newsgroups emails, labeled according to the group. To make illustrations simpler, we restrict the 20newsgroups to the first 10 labels. 
While MNIST is generally nicely behaved, the 20newgroups is not. In particular, it consists of about $n=7500$ samples in dimension $d=15000 > n$, and this would require explicitly regularizing $\norm{A}_{2,\infty}$. 
In all cases, we consider a single layer fully connected embedding,  as in (\ref{eq:non_lin_emebdding_one_layer_def}), where the activation function $\phi$ is either $\phi(x) = x$ (linear case) or $\phi(x) = \sigma(x) = \frac{1}{1 + e^{-x}}$.

\begin{figure*}
\centering
\subcaptionbox{
\label{fig:ng_losses}
Losses}{\includegraphics[width=0.33\textwidth,height=\figheight]{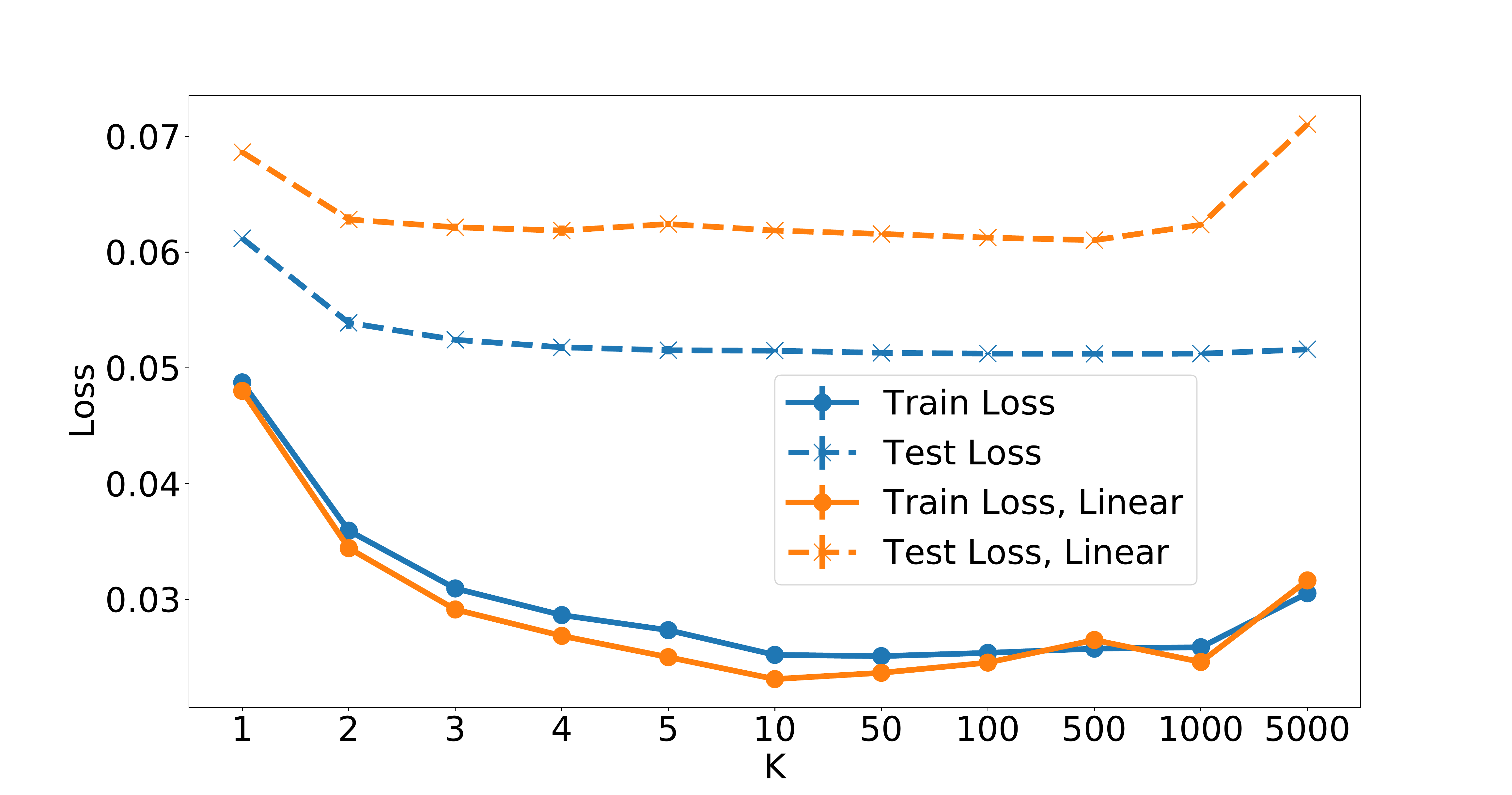}}%
\hfill 
\subcaptionbox{
\label{fig:ng_weights}
$\norm{A}_{2,\infty}$, $\norm{A}_{2,1}/k$ and $\norm{A}_{op} / 
\sqrt{k}$ Weight Norms, Sigmoid Embedding.}{\includegraphics[width=0.33\textwidth,height=\figheight]{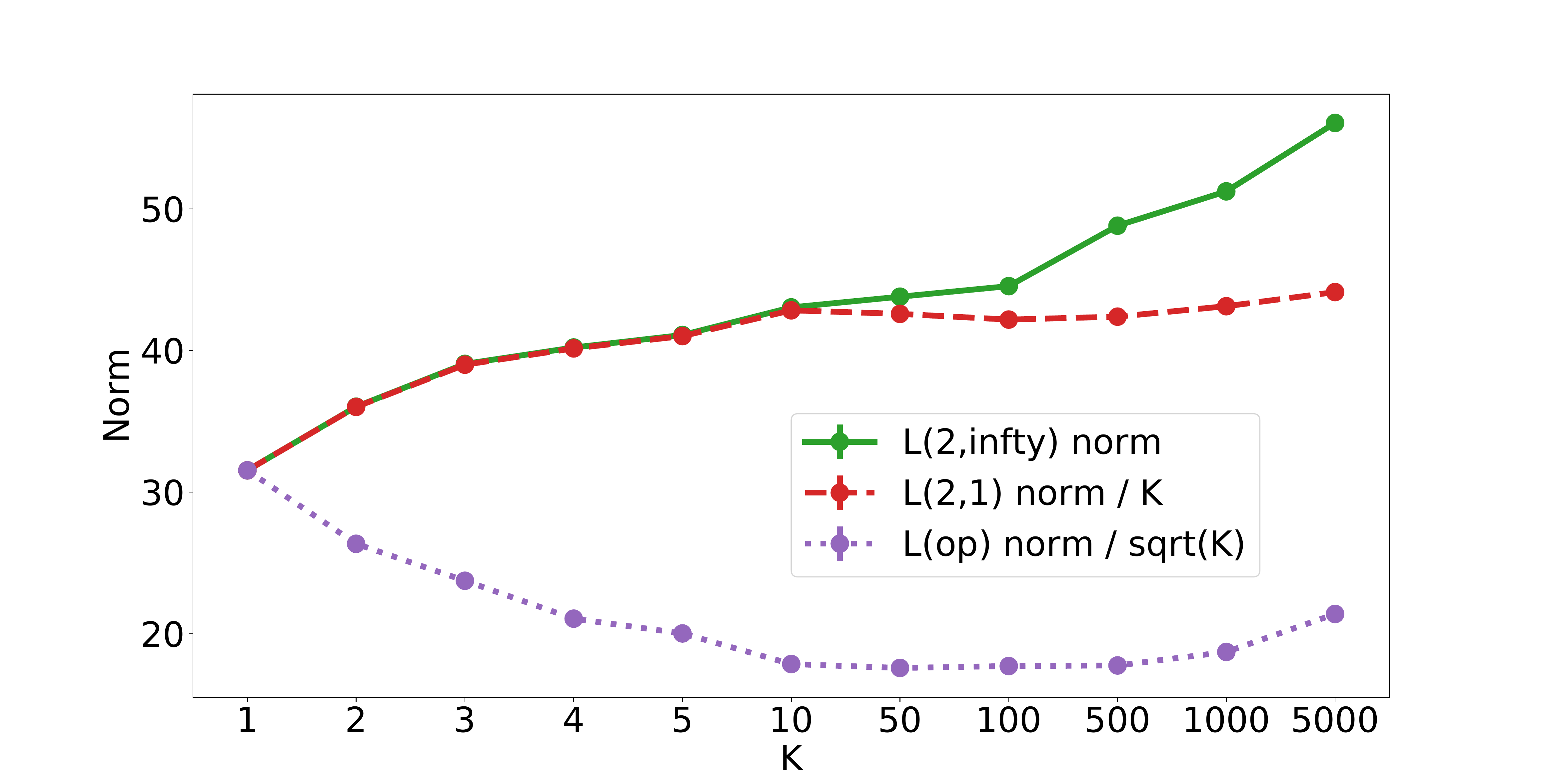}}%
\hfill 
\subcaptionbox{
\label{fig:ng_posneg}
Same/Different Class Components of the Loss}{\includegraphics[width=0.33\textwidth,height=\figheight]{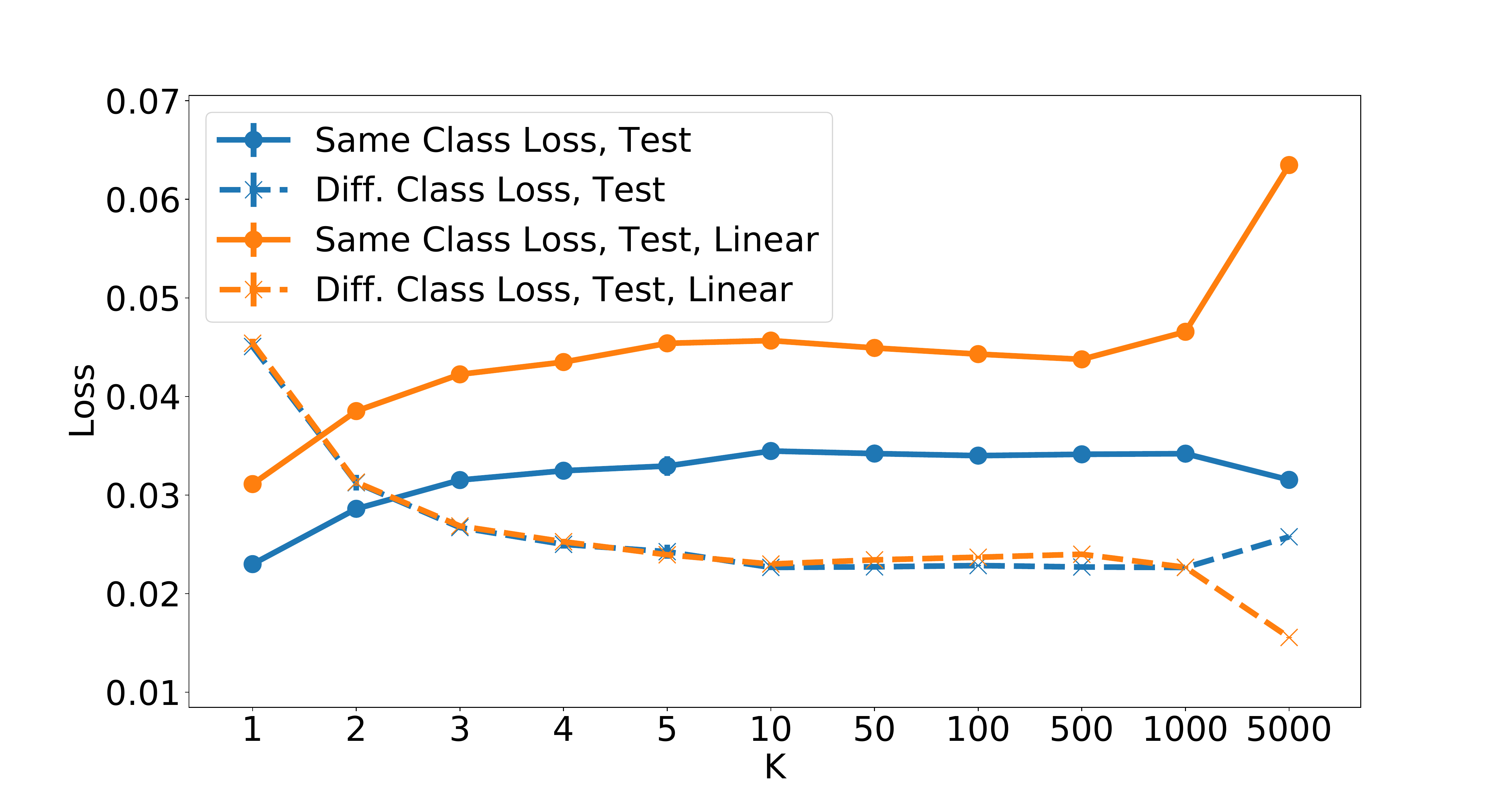}}%
\caption{\label{fig:ng} Newsgroups Experiment}
\end{figure*}

To perform the optimization, we sample feature/label pairs $(x,l)$, $(x',l')$ independently from the train set, and minimize the loss using SGD on batches of such pairs, until convergence. The loss that we use is 
\begin{flalign}
    \ell(x,x',l,l') =&
    9\cdot ReLU(\rho(x,x')) \cdot \Ind{l=l'} \nonumber \\ 
    &+ ReLU(D - \rho(x,x')) \cdot \Ind{l \neq l'}, \label{eq:empirically_used_loss}
\end{flalign}
where $\rho(x,x')$ is the distance after the embedding, given by (\ref{eq:rho_def}). This loss is a version of the loss $\ell_{S,D}^{\lambda}(\rho,y)$ introduced in Section \ref{sec:background_separate_section}, where the same class case is weighted by 9. Our datasets have 10 roughly balanced labels, and therefore the probability of the event $l=l'$ is about $1/10$. Thus the multiplier 9 helps in giving similar weights to the separation $l \neq l'$ and the compression $l = l'$ conditions, enforced by the respective terms in (\ref{eq:empirically_used_loss}). For train or test evaluation, we sample the pairs $(x,l)$, $(x',l')$ from the train or test set, respectively, and compute the mean value of $\ell(x,x',l,l')$ on these samples. 
For MNIST we use the upper threshold $D=0.5$, while for 20newsgroups we set $D=0.1$.

Each experiment was repeated 6 times, and every value in Figures \ref{fig:mnist}, \ref{fig:ng} is a mean over 6  outcome values. Every value also has an error bar which indicates the magnitude of the standard deviation around the mean. However, in most cases, these bars are so small compared to the magnitude of the mean, that they are not visible in the figures. Additional details on the experimental setting are given in Supplementary Material Section \ref{sec:supp_additional_experimental_details}. The full code that was used for the experiments is also provided as Supplementary Material. We now discuss each dataset separately. 
\subsection{MNIST}
For MNIST, the feature dimension is $d = 28\cdot 28 = 784$. As mentioned earlier, in this experiment we do not use any regularization terms on the embedding weight matrix $A$. 
The results are shown in Figure \ref{fig:mnist}.   
First, Figure \ref{fig:mnist_weights} shows the norms of $A$ after the convergence of the optimization,  for the sigmoid embeddings. The $\norm{A}_{2,\infty}$ is given by the green solid line, for values of $k$ ranging from 1 to 5000. The values of  $\norm{A}_{2,1}/k$ and $\norm{A}_{op}/\sqrt{k}$ are given by the dashed red and purple lines, respectively. The fact that all three lines remain of the same magnitude throughout the whole range of $k$ means that the standard non reguilarized SGD optimization works in this case in the bounded amplification regime. That is, the $\norm{A}_{2,\infty}$ norm remains roughly bounded while   $\norm{A}_{2,1}$ grows linearly with $k$ and $\norm{A}_{op}$ grows linearly with $\sqrt{k}$. The situation for the linear case  is similar (see Figure \ref{fig:mnist_weights_linear} in the Supplementary Material).

Next, we consider the losses on train and test sets, Figure \ref{fig:mnist_losses}. For the sigmoid embedding, the blue solid line is the loss evaluated on the train set, while the blue dashed line is the test loss. Observe that the lines are practically identical as $k$ grows. There is no deterioration of the generalization gap as $k$ grows. This may appear  somewhat counterintuitive at first. However, according to Figure \ref{fig:mnist_weights}, we are in the non-sparse, bounded $\norm{A}_{2,\infty}$ norm regime. Therefore our results, in particular Theorem \ref{thm:rad_non_sparse_dim_free}, suggest that indeed the generalization gap should be independent of $k$. Similar picture occurs for the linear case, as shown by the orange lines. Note that the non-linear loss is considerably lower than the linear one.

Finally, since the loss expression (\ref{eq:empirically_used_loss}) is rather involved, 
 it may be unclear from the raw values of the loss whether there is actually any good separation of the labels in the learned metric. In Figure \ref{fig:mnist_posneg} we show explicitly the same class and different class components of the loss. That is, for the sigmoid embedding,  the solid blue line in Figure \ref{fig:mnist_posneg} is the average of the quantity $ReLU(\rho(x,x'))$  over pairs of features $(x,x')$ on \emph{test set} for which $l=l'$ (that is, conditioned on the fact that labels are equal), while the dashed blue line is the average of $ReLU(0.5 - \rho(x,x'))$ on pairs with $l \neq l'$. Thus for instance, for $k=10$, at least on average, points with the same label are at distance $0.07$ while points with different labels are at distance $0.5 - 0.12 = 0.38$. 

\subsection{Newsgroups}
As mentioned earlier, the 20newsgrous dataset \cite{newsgroups_dataset} was restricted to the first 10 labels. Each email was represented as a bag-of-words vector 
over $d=15000$ most common tokens in the dataset. Each vector was then normalized to have Euclidean norm 1.  
This dataset contains 9640 samples, of which $20\%$ were taken at random as the test set, leaving $n=7700$ train samples. 
Since $d>n$, one can in principle severely overfit the data even with $k=1$. 
Thus one has to control the $\norm{A}_{2,\infty}$ norm, which was achieved by adding the term 
$\frac{0.1}{d} \norm{A}^2_{2,\infty}$ to the loss
(\ref{eq:empirically_used_loss}) in the sigmoid case, and 
$\frac{1.0}{d} \norm{A}^2_{2,\infty}$ in the linear case. The regularization values  $0.1$ and $1.0$ above where chosen such as to yield the best test performance at $k=10$ in their respective cases.

The experiment results are shown in Figure \ref{fig:ng}, with plots similar to those of the MNIST dataset. In particular, we see in Figure \ref{fig:ng_weights} that while we enforce the 
bounded $\norm{A}_{2,\infty}$ norm constraint with the regularization term, there was no sparsity, i.e.
$\norm{A}_{2,1}$ and $\norm{A}_{op}$ grew with $k$, as in the MNIST case. The generalization gap, Figure \ref{fig:ng_losses} was considerable, but it was \emph{the same gap} throughout the larger values of $k$ ($k\geq 5$). This can also be seen in Figure \ref{fig:ng_posneg} -- the values of the same class/different class expectations on the test set remain relatively constant over the range of $k$. Recall that the test set for $k=50$ was shown in Figure \ref{fig:newsgroup_tsne_b}, and corresponds to fairly good label separation, as can be expected from the values in Figure  \ref{fig:ng_posneg}.

\section{Conclusion}
\label{sec:conclusion}
In this work we have provided the first uniform 
generalization guarantees for metric learning with a non-linear embedding. In particular, we have shown that dimension free generalization is possible in the non-sparse regime. 

We believe that further study of the non sparse regime, and in particular understanding better in which other situations the non sparse regime occurs, and what are the associated bounds, would significantly advance our understanding overparametrized systems. In addition, observe that almost \emph{any} family of sets $\mc{U}_l \subset \RR^n$ , $l\geq 1$, (given, say, trivial compactness constraints) has the $l$-approximability property, as introduced in Section \ref{sec:non_sparse_proof}, for \emph{some}, necessarily vanishing, rate. We believe that the study of machine learning problems through the lens of such rates of their induced restrictions $\mc{U}_l$ is a very  promising new research direction. 

\bibliographystyle{apalike}
\bibliography{partitions_metrics.bib}

\appendix

\onecolumn
\begin{center} 
{\Large \textbf{Dimension Free Generalization Bounds for Non Linear Metric Learning }}
\newline
\newline
{\Large \textbf{Supplementary Material}}
\newline
\end{center}

\section{Proof of Theorem \ref{thm:rad_direct_bound}}
\label{sec:proof_of_thm_rad_sparse}

We will require the following bound on the Lipschitz constant of $\zeta_k$.
\begin{lem}
\label{lem:lip_of_kappa}
For $x,x',y,y' \in \RR^k$  set 
 $\gamma = \max \Set {\norm{x}_2,\norm{x'}_2,\norm{y}_2,\norm{y'}_2}$. Then
\begin{equation}
\label{eq:lip_lem_l2_bound}
\Abs{\zeta_k(x,x')-\zeta_k(y,y')} \leq \frac{8\gamma}{k} \Brack{\norm{x-y}_2 + \norm{x'-y'}_2}.
\end{equation}
\end{lem}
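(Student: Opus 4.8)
The plan is to reduce the whole estimate to one elementary algebraic identity together with Cauchy--Schwarz. First I would unfold the definition and factor the difference of squared norms. Setting $a = x - x'$ and $b = y - y'$, the identity $\norm{a}_2^2 - \norm{b}_2^2 = \inner{a - b}{a + b}$ gives
\begin{equation*}
\zeta_k(x,x') - \zeta_k(y,y') = \frac{1}{k}\Brack{\norm{x-x'}_2^2 - \norm{y-y'}_2^2} = \frac{1}{k}\inner{(x-x') - (y-y')}{(x-x') + (y-y')}.
\end{equation*}
This is the key move: it linearizes the quadratic quantity $\zeta_k$ into an inner product of a ``difference'' vector and a ``sum'' vector, each of which I can then bound separately.

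Next I would apply Cauchy--Schwarz to this inner product and handle the two factors by the triangle inequality. For the difference factor, regrouping gives $(x-x')-(y-y') = (x-y) - (x'-y')$, so $\norm{(x-x')-(y-y')}_2 \leq \norm{x-y}_2 + \norm{x'-y'}_2$, which already produces exactly the bracketed term appearing on the right-hand side of (\ref{eq:lip_lem_l2_bound}). For the sum factor, the triangle inequality gives $\norm{(x-x')+(y-y')}_2 \leq \norm{x}_2 + \norm{x'}_2 + \norm{y}_2 + \norm{y'}_2 \leq 4\gamma$ by the definition of $\gamma$. Combining the two factors with the $1/k$ prefactor yields
\begin{equation*}
\Abs{\zeta_k(x,x') - \zeta_k(y,y')} \leq \frac{4\gamma}{k}\Brack{\norm{x-y}_2 + \norm{x'-y'}_2},
\end{equation*}
which is in fact sharper than the claimed bound and trivially implies (\ref{eq:lip_lem_l2_bound}) since $4 \leq 8$.

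There is no real obstacle here; the argument is a short computation and the only point requiring minor care is the correct regrouping of terms in the difference factor so that the triangle inequality produces $\norm{x-y}_2 + \norm{x'-y'}_2$ rather than a cross combination. The slack between my constant $4$ and the stated constant $8$ suggests the authors either bounded the sum factor more loosely (e.g.\ treating the two norm differences independently) or simply chose a round constant for convenience; in any case the tighter constant costs nothing, so I would either record the $4\gamma$ bound directly or relax it at the final step to match the statement.
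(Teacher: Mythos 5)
Your proposal is correct and is essentially the paper's own argument in tidier form: writing $a = x-x'$, $b = y-y'$, $\Delta = x-y$, $\Delta' = x'-y'$, your identity $\norm{a}_2^2-\norm{b}_2^2 = \inner{a-b}{a+b} = 2\inner{a}{\Delta-\Delta'} - \norm{\Delta-\Delta'}_2^2$ is exactly the expansion appearing in the paper's proof, which then bounds the cross term and the quadratic term separately, each by $4\gamma\Brack{\norm{\Delta}_2+\norm{\Delta'}_2}$ (hence the constant $8$), whereas your single Cauchy--Schwarz with $\norm{a+b}_2\leq 4\gamma$ handles both at once and yields the sharper constant $4$. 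Your diagnosis of the source of the slack is accurate, and the improved constant is immaterial since the lemma is only invoked up to absolute constants in the covering-number argument.
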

\begin{proof}
Set $\Delta = x-y, \Delta' = x' - y'$. 
Clearly $\norm{x-x'}_2\leq 2\gamma$, and $\norm{\Delta-\Delta'}_2\leq 4\gamma$. We have
\begin{flalign*}
&\Abs{\zeta_k(x,x')-\zeta_k(y,y')} \\
&= 
 \frac{1}{k}\Abs{\norm{x-x'}_2^2 - \norm{x-x'-\Delta + \Delta'}_2^2 } \\ 
 &= \frac{1}{k}\Abs{ -2\inner{x-x'}{\Delta - \Delta'} - 
 \norm{\Delta - \Delta'}_2^2}  \\ 
&= \frac{1}{k}\Abs{ -2\inner{x-x'}{\Delta - \Delta'} - 
 \inner{\Delta - \Delta'}{\Delta - \Delta'}}  \\ 
&\leq \frac{1}{k}\Brack{ 4\gamma\Brack{\norm{\Delta}_2 +\norm{\Delta'}_2} +  
4\gamma\Brack{\norm{\Delta}_2 +\norm{\Delta'}_2}} .
\end{flalign*}
\end{proof}

\begin{proof}[Proof of Theorem \ref{thm:rad_direct_bound}]
Denote throughout of the proof $\mcF := \mcF_{k}(\mc{J}^k_{a,a'})$. Our plan is to bound the covering numbers $\log \mc{N}(\mcF,\eps)$ and then to use the Dudley entropy bound for the Rademacher complexity. 
Fix $\eps>0$. Let $\mc{M}$ and $\mc{M}'$ be $\eps$-covers for the sets $X\mc{J}_a^k$ and $X'\mc{J}_a^k$, such  that $\log \Abs{\mc{M}}$ and $\log \Abs{\mc{M}'}$ are at most $\frac{a^2b^2n}{\eps^2}\log(2dk)$, as guaranteed by Lemma \ref{lem:A_21_regularized_bounds}. Then the set 
$\mc{M}\times\mc{M}'$ is an $\sqrt{2}\eps$-cover for the set $\mc{Z} = \Set{ (XA,X'A) \setsep A \in \mc{J}_{a,a'}^k } \subset \RR^{n\times 2k}$. Note that the elements of 
$\mc{M}\times\mc{M}'$ are not necessarily members of $\mc{Z}$. However, by taking projections onto $\mc{Z}$, we can find an $2\sqrt{2}\eps$-cover of size at most $\Abs{\mc{M}}\times \Abs{\mc{M}'}$ with elements inside $\mc{Z}$.  Denote such a cover by $\mc{P}$.

Let $\phi \mc{Z}$ denote the image of $\mc{Z}$ under $\phi$, acting coordinatewise, and denote by $\hat{\zeta_k}: \RR^{n\times 2k} \rightarrow \RR^n$ the map that applies $\zeta_k$ on the rows. In particular, for every $A$ and every coordinate $i \leq n$, we have 
\begin{equation}
    \hat{\zeta_k}\Brack{\phi\Brack{\Brack{XA,X'A}}}_i = \zeta_k\Brack{\phi(A^tx_i), \phi(A^tx'_i)}.
\end{equation}
Equivalently, by definition, we have that 
$\mcF = \hat{\zeta_k} \Brack{ \phi \Brack{\mc{Z}}} $.

Finally, since $\phi(0)=0$, for any $x_i$ and $A\in \mc{J}_{a,a'}^k$ we have
\begin{equation}
\label{eq:phi_A_x_i_bound}
\norm{\phi(A^tx_i)}_2 \leq b a' \norm{\phi}_{Lip}.
\end{equation}
It therefore follows by Lemma \ref{lem:lip_of_kappa} that 
$\hat{\zeta_k}$ is $\frac{\sqrt{2} 8 b a' \norm{\phi}_{Lip}}{k}$-Lipschitz as a function from $\phi(\mc{Z})$ to $\RR^n$.  Consequently, a $2\sqrt{2} \eps$-cover $\mc{P}$ of $\mc{Z}$ is mapped by $\hat{\zeta_k}\circ \phi$ into an $2\sqrt{2} \cdot \frac{\sqrt{2} 8 b a' \norm{\phi}_{Lip}}{k} \cdot \norm{\phi}_{Lip} \cdot \eps$-cover of $\mc{F}$.

Combining the above statements, we have shown that there is an absolute constant $c>0$ such that for any $\eps>0$, \begin{equation}
\label{eq:cover_bound_full_direct_thm}
\log \mc{N}( \mcF,\eps )   \leq 
c \frac{a^2 a'^2 b^4 n \norm{\phi}_{Lip}^4}{k^2 \eps^2} \log(2dk). 
\end{equation}

Set $D = diam(\mcF) = \sup_{u,v\in \mcF} \norm{u-v}_2$. Using (\ref{eq:phi_A_x_i_bound}) again, we have $D\leq \frac{4 b^2 a'^2 \norm{\phi}_{Lip}^2}{k} \cdot \sqrt{n}$. 
Next, by Dudley's entropy bound, \cite{vershynin_high_dimensional_2018}, and using (\ref{eq:cover_bound_full_direct_thm}),
for any $\delta>0$,
\begin{flalign*}
\eRad{\mcF} &\leq 
\frac{4\delta}{\sqrt{n}} + \frac{12}{n}\int_{\delta}^{D/2} 
\sqrt{\log \mc{N}( \mcF,\eps )} d\eps \\ 
&\leq 
\frac{4\delta}{\sqrt{n}} + 
\frac{12\sqrt{c}}{\sqrt{n}} \frac{a a' b^2  \norm{\phi}^2_{Lip}}{k} \log\Brack{D / \delta}.
\end{flalign*}
Taking $\delta = \frac{1}{\sqrt{n}}$ and using the above bound on $D$ completes the proof. 
\end{proof}

\section{Proof of Corollary \ref{cor:rad_non_sparse}}
\label{sec:cor_rad_non_sparse_proof}

\begin{proof}
For any $A \in \RR^{d \times k}$, we have  
$\norm{A}_{2,1} \leq k \norm{A}_{2,\infty}$ and 
$\norm{A}_{op}\leq \sqrt{k}\norm{A}_{2,\infty}$. The first inequality follows immediately from the definitions. For the second inequality observe that
\begin{flalign*}
    \norm{A}_{op} = \norm{A^t}_{op} &= \sup_{\norm{v}_2 = 1} \norm{A^tv} = 
    \sup_{\norm{v}_2 = 1} 
    \Brack{\sum_{i=1}^k \inner{A_{\cdot i}}{v}^2}^{\half} \\&\leq
    \Brack{k \max_{i\leq k} \norm{A_{\cdot i}}_2^2}^{\half} = \sqrt{k} \norm{A}_{2,\infty}.
\end{flalign*}
The two inequalities above imply that 
$\mc{G}_a^k \subset \mc{J}_{ka,\sqrt{k}a}^k$ and thus the Corollary follows from Theorem \ref{thm:rad_direct_bound}.
\end{proof}

\section{First Proof of Theorem \ref{thm:rad_non_sparse_dim_free}}
\label{sec:appendix_additivity_proof_of_non_sparse}
\begin{proof}[Proof 1 of Theorem \ref{thm:rad_non_sparse_dim_free}:]
For any set of families of functions $\mc{H}_i$, $i\leq m$, define the sum 
$\oplus_{i=1}^m \mc{H}_i$ by 
$\oplus_{i=1}^m \mc{H}_i =  \Set{ g = \sum_{i=1}^m h_i \setsep h_i \in \mc{H}_i}$. For a scalar $\alpha$ set also $\alpha \mc{H} = \Set{\alpha h \setsep h \in \mc{H}}$. We observe the following:
\begin{equation}
\label{eq:G_k_as_set_sum}
\mc{F}_k(\mc{G}^k_a) = \frac{1}{k} \Brack{\oplus_{i=1}^k \mcF_1(\mc{G}^1_a)},
\end{equation} 
where the expression on the right handside is the sum of $\mc{F}_1(\mc{G}^1_a)$ taken $k$ times, normalized by $\frac{1}{k}$.
This equality is a consequence of the definition of $\zeta_k$ as an average, and of the fact that $\mc{G}^k_a$ is a product set of its individual columns. More explicitly, given $A\in \mc{G}_a^k$, the vector in $\mc{F}_k(\mc{G}^k_a)$ corresponding to $A$ is, by definition, $\Brack{\zeta_k(\phi(A^tx_i), \phi(A^tx'_i)
    }_{i=1}^n \in \RR^n$.  For a fixed coordinate $i\leq n$ we 
have
\begin{flalign}
    \zeta_k(\phi(A^tx_i), \phi(A^tx'_i))
     &= \frac{1}{k} \sum_{j=1}^k 
    \Brack{\phi(  (A^tx_i)_j ) - \phi(  (A^tx'_i)_j ) }^2 \nonumber \\ 
    &=\frac{1}{k} \sum_{j=1}^k \Brack{
    \phi(  A_{\cdot j}^t x_i ) - \phi(  A_{\cdot j}^t x'_i )}^2, \label{eq:rad_proof_direct_sum_explained}
\end{flalign}
where $(A^tx_i)_j$ is the $j$-th coordinate of $(A^tx_i)$ and 
$A_{\cdot j}^t x_i$ is the product of $A$ restricted to its $j$-th column with $x_i$. Since each term in the sum (\ref{eq:rad_proof_direct_sum_explained}) is in $\mc{G}_a^1$, the claim (\ref{eq:G_k_as_set_sum}) follows.

Next, one can readily verify that for any set of families $\mc{H}_i$, we have $\eRad{\oplus_m \mc{H}_m} = \sum_{i=1}^m \eRad{\mc{H}_i} $, and $\eRad{\alpha \mc{H}} = \alpha \eRad{\mc{H}}$ for $\alpha \geq 0$ (see \cite{mendelson_bartlett_rad}, \cite{mohri_book_2018} ). Therefore, using (\ref{eq:G_k_as_set_sum}) we have 
\begin{equation}
\eRad{\mc{F}_k(\mc{G}^k_a)} = \eRad{\mc{F}_1(\mc{G}^1_a)}.
\end{equation}
The statement now follows from Corollary \ref{cor:rad_non_sparse} applied with $k=1$.
\end{proof}

\section{Second Proof of Theorem \ref{thm:rad_non_sparse_dim_free}}
\label{sec:appendix_dim_red_proof_of_non_sparse_result}

\begin{proof}[Proof 2 of Theorem \ref{thm:rad_non_sparse_dim_free}, Dimension Reduction:]
Fix $l<k$, and set $t_0 = \frac{\Brack{a b\norm{\phi}_{Lip}}^2 \sqrt{\log 4n}}{\sqrt{l}}$. By Lemma \ref{lem:approximation_f_k_fl}, for every $f \in \mcF_k$ one can write $f = \hat{f} + (f-\hat{f})$ such that $\hat{f}\in \mcF_l$ and $\norm{f-\hat{f}}_{\infty} \leq t_0$. It follows that 
\begin{flalign*}
    &\eRad{\mcF_k} = \frac{1}{n}\Expsubidx{\eps}{sup_{f\in \mcF_k} \sum_{i=1}^n \eps_i f_i  }  \\ 
    &\leq  \frac{1}{n}\Expsubidx{\eps}{sup_{f\in \mcF_l} \sum_{i=1}^n \eps_i f_i  } 
    + \frac{1}{n}\Expsubidx{\eps}{sup_{\norm{f}_{\infty}\leq t_0} \sum_{i=1}^n \eps_i f_i  } \\ 
    &\leq  \eRad{\mcF_l}   + t_0 \\ 
    &\leq 
\Brack{\frac{1}{n} + 
 c \cdot \frac{a^2 b^2 \sqrt{l} \norm{\phi}^2_{Lip}}{\sqrt{n}} \log \Brack{n a^2 \sqrt{l} \norm{\phi}_{Lip}}} 
 \\ &\spaceo\spaceo\spaceo
 +\frac{\Brack{a b\norm{\phi}_{Lip}}^2 \sqrt{\log 4n}}{\sqrt{l}},
\end{flalign*}
where the last inequality follows by Corollary \ref{cor:rad_non_sparse}. Minimizing the above expression in $l$ yields (\ref{eq:non_sparse_bound_n_quater}).
\end{proof}

\section{Additional Details on Experiments}
\label{sec:supp_additional_experimental_details}

\begin{figure*}
\centering
\subcaptionbox{
\label{fig:mnist_weights_linear}
MNIST Experiment. $\norm{A}_{2,\infty}$, $\norm{A}_{2,1}/k$ and $\norm{A}_{op} / 
\sqrt{k}$ Weight Norms, Linear Embedding.}{\includegraphics[width=0.33\textwidth,height=3.5cm]{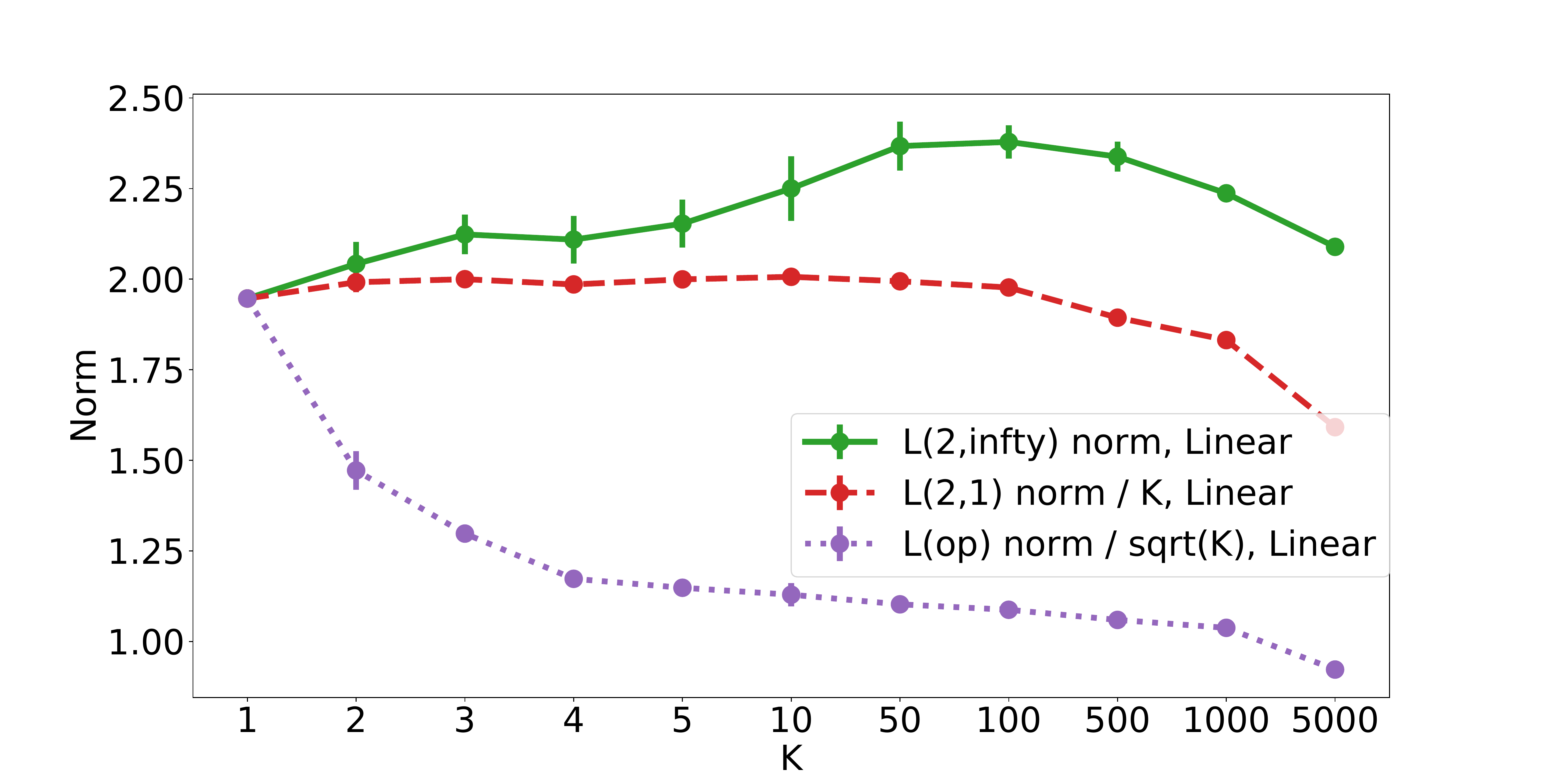}}%
\hfill 
\subcaptionbox{
\label{fig:ng_weights_linear}
Newsgroups Experiment. $\norm{A}_{2,\infty}$, $\norm{A}_{2,1}/k$ and $\norm{A}_{op} / 
\sqrt{k}$ Weight Norms, Linear Embedding.}{\includegraphics[width=0.33\textwidth,height=3.5cm]{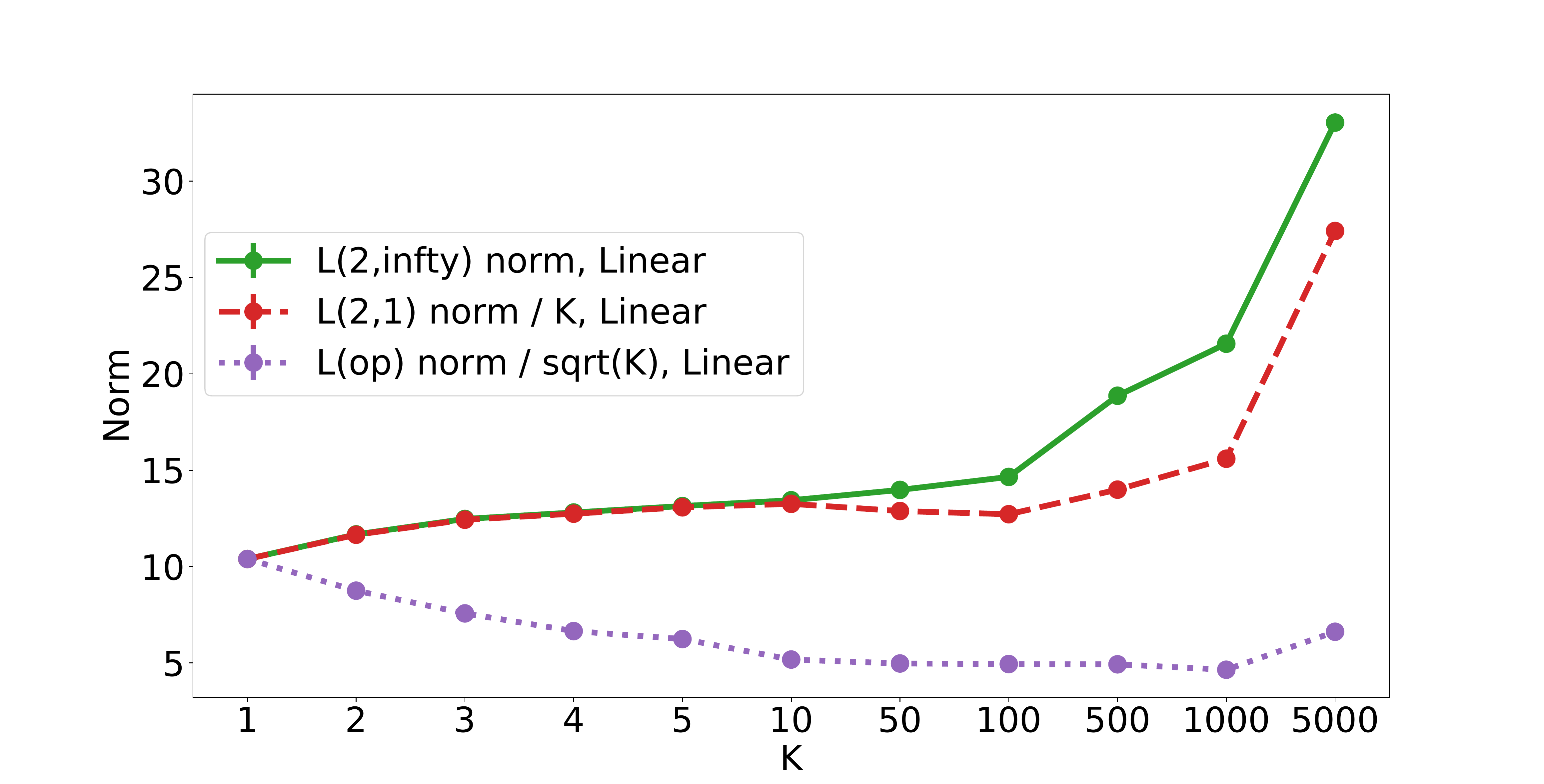}}%
\end{figure*}

\begin{itemize}
    \item 
The full code of the experiments is provided as a Supplementary Material. The code is based on the Tensorflow framework. Details on the invocation can be found in the README.md file. 
    
    \item The precise process of sampling the pairs $(x,x')$ was as follows: 
    We fix two batch sizes $Bx,By$, typically $Bx=By=500$.
    We take a batch $S_1$ of size $Bx$ and another, independent batch, $S_2$, of size $B_y$. Then the set $S$ of all feature pairs $(x,x')$ such that $x \in S_1$ and $x' \in S_2$ is created, with corresponding labels $y$ as described in Section \ref{sec:intro}. This $S$ is fed to the optimizer as a single batch. 
    
    \item An \emph{epoch} is when first batch iterator, $S_1$, finishes a single iteration over the whole data. The data is permuted at the end of the epoch. 
    
    \item All experiments were run for 180 epochs. Convergence typically occurred much earlier, around 30 to 50 epochs, depending on the value of $k$.
    
    \item All experiments were run on a GTX 1080 GPU, with Tensorflow 2.0. 
    A single run of the longest experiment, MNIST with $k=5000$, took under 150 minutes to complete. All experiments combined, with repetitions, finish running in under 48 hours. 
\end{itemize}

\end{document}